\documentclass[reqno, a4paper]{amsart}
\usepackage[centering]{geometry}
\usepackage{amsmath,amssymb,amsthm}
\usepackage[scr=boondox]{mathalfa}
\usepackage[svgnames]{xcolor}
\usepackage[unicode, colorlinks=false, hidelinks]{hyperref}

\usepackage{enumerate} 
\usepackage{cleveref}

\usepackage{bbm}
\usepackage{booktabs}
\usepackage{graphicx}

\usepackage[shortlabels]{enumitem}
\usepackage{dsfont}
\usepackage{color}
\usepackage[utf8]{inputenc}
\usepackage[T1]{fontenc}
\usepackage{lmodern}
\usepackage{anyfontsize}
\usepackage{cleveref}
\usepackage{comment}
\usepackage{makecell}

\usepackage{tikz-cd}
\usepackage{algorithm}
\usepackage{algorithmic}
\usepackage{marginnote}
\usepackage{csvsimple}


\newcommand{\MB}[1]{\marginpar
  {\color{purple} 
    \small 
    \ifodd\value{page} 
      \raggedright 
    \else 
      \raggedleft  
    \fi 
    \parbox{2.5cm}{M: #1}
  }
}
\newcommand{\JB}[1]{\marginpar
  {\color{magenta} 
    \small 
    \ifodd\value{page} 
      \raggedright 
    \else 
      \raggedleft  
    \fi 
    \parbox{2.5cm}{J: #1}
  }
}
\newcommand{\LR}[1]{\marginpar
  {\color{MidnightBlue} 
    \small 
    \ifodd\value{page} 
      \raggedright 
    \else 
      \raggedleft  
    \fi 
    \parbox{2.5cm}{L: #1}
  }
}

\theoremstyle{plain}
\newtheorem{theorem}{Theorem}[section]

\newtheorem{lemma}[theorem]{Lemma}

\theoremstyle{definition}

\newtheorem{remark}[theorem]{Remark}

\newtheorem{example}[theorem]{Example}

\newcommand{\R}{\mathbb{R}}

\newcommand{\N}{\mathbb{N}}

\newcommand{\E}{\mathbb{E}}

\newcommand{\cP}{\mathcal{P}}

\newcommand{\push}[2]{{#1}_\##2}

\renewcommand{\d}{\mathrm{d}}

\renewcommand{\P}{\mathcal P}

\DeclareMathOperator*{\argmin}{\mathrm{argmin}}

\newcommand{\X}{\mathcal{X}}
\newcommand{\Y}{\mathcal{Y}}


\title{The Geometry of Financial Institutions -\\
Wasserstein Clustering of Financial Data}

\makeatletter
\renewcommand{\@oddhead}{\hfill {\fontsize{10}{11}\selectfont \textsl{The Geometry of Financial Institutions -
Wasserstein Clustering of Financial Data}} \hfill \thepage}
\makeatother



\author[L. Riess]{L.\ Riess$^{\dagger,\ddagger}$}
\author[J. Backhoff]{J.\ Backhoff$^{*,\dagger}$}
\author[M. Beiglböck]{M.\ Beiglböck$^{*,\dagger}$}
\author[J. Temme]{J.\ Temme$^{*,\ddagger}$}
\author[A. Wolf]{A.\ Wolf$^{*,\ddagger}$}

\thanks{ This research was funded
in whole or in part by the Austrian Science Fund (FWF) under grants 10.55776/P36835, P35197, and Y782, and the Oesterreichische Nationalbank (OeNB) through project EATE II. For
open access purposes, the author has applied a CC BY public copyright
license to any author accepted manuscript version arising from this
submission. \\
    \begin{tabular}{@{}l@{\,}l}  
        $^*$ &Authors listed in alphabetical order \\  
        $^\dagger$ &University of Vienna \\  
        $^\ddagger$ &Oesterreichische Nationalbank  
    \end{tabular}
}

\date{\today}

\begin{document}

\begin{abstract}

Financial regulation requires the submission of diverse and often highly granular data from financial institutions to regulators. In turn, regulators face the challenge of condensing this data into a comprehensive map that captures the mutual similarity or distance between different institutions and identifies clusters or outliers based on features like size, credit portfolio, or business model.
Additionally, missing data due to varying regulatory requirements for different types of institutions, can further complicate this task.

To address these challenges, we interpret the credit data of financial institutions as probability distributions whose respective distances can be assessed through optimal transport theory. Specifically, we propose a variant of Lloyd's algorithm that applies to probability distributions and uses generalized Wasserstein barycenters to construct a metric space. Our approach provides a solution for the mapping of the banking landscape, enabling regulators to identify clusters of financial institutions and assess their relative similarity or distance.

\end{abstract}

\maketitle
\section{Introduction}

The main contribution of this article is an algorithm that takes several (discrete) probability distributions in a high-dimensional space and clusters them, representing each cluster as a point in a metric space. In this space, the distance between points reflects how different the underlying distributions are. A key feature of the algorithm is its ability to handle missing data — even when entire coordinates are missing systematically from some distributions.


Our original motivation to devise this type of algorithm stems from a challenge faced by regulators of the financial industry. 
Data that are delivered from financial institutions to the regulator consist of various different formats, from highly aggregate data such as the total volume of the balance sheet, down to very granular data about individual credits described by characteristics such as volume, interest rate, etc. Two individual credits might then be considered as similar if those characteristics take similar numerical values, i.e.\ have small Euclidean distance when viewed as vectors in $\R^d$. To build a distance between financial institutions, one can view these institutions as probability distributions on the space of possible credits (see Section \ref{sec:recon_financial} below) and determine the respective distance as a Wasserstein distance. One can then interpret the landscape of all financial institutions as an ensemble of points in the Wasserstein space, susceptible to familiar methods of clustering, outlier detection, etc. 
A particular challenge which renders the problem more complicated is the ``missingness of data''. Data delivered by institutions may have missing values. More crucially, data may be missing systematically, as different institutions are required to deliver different data at varying levels of detail and granularity. 

The algorithm we propose as a ramification simultaneously clusters probability measures with missing data and represents them as elements of a metric space. The principal structure follows the idea of Lloyd's algorithm for $k$-means clustering and combines it with the concept of generalized Wasserstein barycenters, recently introduced by Delon, Gozlan and Saint-Dizier \cite{JuGoSa21}. A classical approach to dealing with missing data would be (e.g.) to impute from weighted nearest neighbors. However, such a type of imputation systematically skews results since probability distributions with less reported data tend to appear closer to other points which the imputation procedure is attempting to mimic. This type of bias may be undesirable, e.g.\ when one is trying to identify distributions that are ``atypical'', where ``atypical'' could specifically refer to the manner in which data are missing. We devise a particular way of soft imputation which accounts for a random element in filling up missing values, and in particular avoids the above mentioned bias. 

It will be technically convenient to formulate the algorithm in a more general form in Section \ref{sec:clustering_proj_metric}. That is, we cluster and perform soft imputation for points in an arbitrary metric space rather than a Wasserstein space of probability measures. This allows us to simplify the presentation and has the additional benefit that we obtain a version of classical Euclidean $k$-means clustering with missing data. This general perspective allows us in Section \ref{sec:GMM_Rec} to compare our method to existing solutions for reconstructing the metric arrangement of Euclidean points. Additionally, further simulation experiments can be found in the Supplementary Material, and our method is compared to others when viewed solely as a clustering method. In particular, it is compared to $k$-pod \cite{ChChBa16}, another method that tackles the same problem in the Euclidean case, i.e.\ \eqref{eq:NA-kmeans}, meaning the clustering problem without imputing a priori. Notably, our clustering algorithm outperforms $k$-pod consistently. 
As for the important case of probability distributions with missing data our contribution is detailed in Section \ref{sec:NA_WKMeans} and complemented experimentally in Sections \ref{sec:recon_financial} and \ref{sec:Justifying_NA_W_KMeans} with results using actual (anonymized) loan data reported by financial institutions to Oesterreichische Nationalbank, the central bank of Austria.


\begin{figure}[ht]
\centering
\begin{minipage}{0.5\linewidth}
    \centering
    \includegraphics[height=5.75cm]{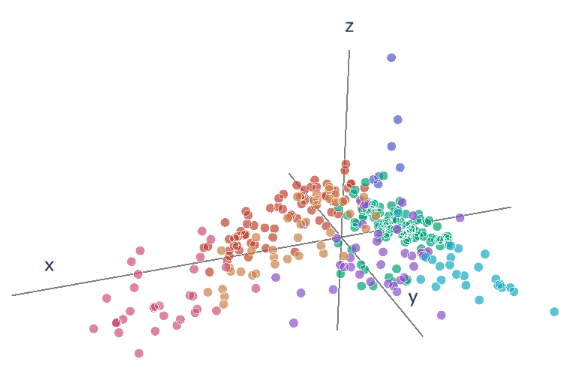}
\end{minipage}%
\begin{minipage}{0.5\linewidth}
    \centering
    \includegraphics[height = 5.75cm]{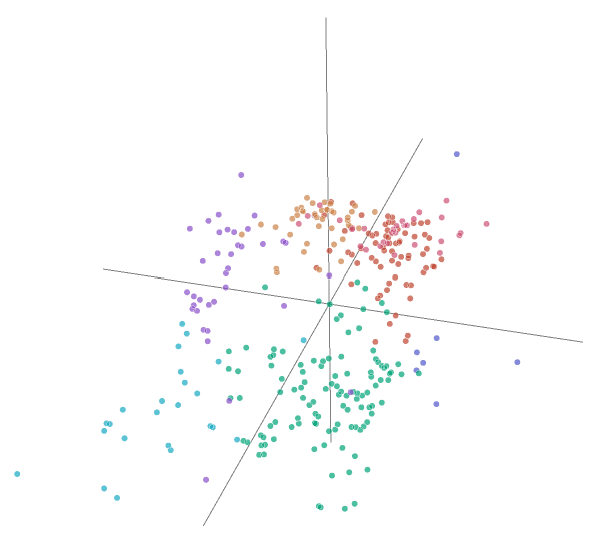}
\end{minipage}
\caption{3-d visualization of the Austrian Banking Landscape from two different perspectives.\\ Interactive plot: \url{https://lorenzriess.github.io/TGOFI_landscape.html}}
\label{fig:landscape_labels}
\end{figure}

\subsection*{Related Literature} 

Clustering distributions using Wasserstein distances has been explored in various works including \cite{PaDoDrKoTsYa21,ZhChYa22,VaWa19,HuHoDaNgYuBuPh21,HoNgYuBuHuPh17}. This approach has also been applied in financial contexts, such as in \cite{HoIsMu21}, for the clustering of market regimes. In \cite{StJe17}, Staib et al.\ proposed Wasserstein $k$-means++ as well as an initialization strategy, generalizing the classical $k$-means++ algorithm, cf.\ \cite{ArVa07}. However, to the best of our knowledge, clustering distributions with missing coordinates has not been investigated before. To address this problem, we rely heavily on the concept of the generalized Wasserstein barycenter, introduced by Delon, Gozlan, and Saint-Dizier in \cite{JuGoSa21}, which extends the classical Wasserstein barycenter of Agueh and Carlier \cite{AgCa11}. In the Euclidean case, we are aware of one method for the $k$-means problem with missing values which does not impute points beforehand. This method is called $k$-pod and was introduced by Chi et.al.\ in \cite{ChChBa16}. For computational aspects related to optimal transport and regularized Wasserstein distance, we refer to the work of Cuturi \cite{Cu13} and the book \cite{Computational_OT} by Cuturi and Peyr\'e. Additionally, we use the \texttt{POT} package (Python package for optimal transport, see \cite{FlCoGrAlBoChChCoFaFoGaGaJaRaReRoScSeSuTaToVa21}) extensively for implementation purposes.

\section{\texorpdfstring{$k$}k-means Clustering in Metric Spaces: A Summary}\label{sec:KMeans_abstract}
Let $x_1,\ldots,x_N$ be given points in a fixed metric space $(\X,d)$. The metric $k$-means problem is concerned with assigning the points to $k$ clusters. The clusters hereby are governed by $k$ barycenters, i.e.\ an appropriate notion of average of the points in the corresponding cluster. $k$-means was first introduced by MacQueen in \cite{MacQ67}. The problem can be formalized as
\begin{align}\tag{KM}\label{general_KMeans} 
\min\limits_{\substack{c_j\in\X\\ a\in[k]^N}}\sum_{i=1}^N d(x_i,c_{a_i})^2,
\end{align}
where we use the notation $[n]:=\{1,\ldots,n\}$, $n\in\mathbb{N}$.
In this formulation $a$ denotes a vector of assignments, i.e.\ $a_i$ indicates the cluster membership of data point $x_i$. The points $c_1,\ldots,c_k$ serve as cluster barycenters. Furthermore, a notion of barycenter, typically a function of some of the data points, is needed. Then, problem \eqref{general_KMeans} can be tackled by the well-established Lloyd algorithm, cf.\ \cite{LLoyd82}.

Following an initialization phase, two steps are iterated:
\begin{enumerate}[i), nosep]
\item assignment step: given barycenters $c_j\in\mathcal{X}$, for each $i\in[N]$ pick 
\begin{align}
    a_i\in\argmin\limits_{j\in [k]}d(x_i,c_j),
\end{align}
\item barycenter step: given assignment $a$, update the barycenters, i.e.\  for each $j\in[k]$ pick
\begin{align}\label{eq:centroid_general_kmeans} 
c_j\in\text{barycenter}(\{x_i : a_i = j\}).
\end{align}
\end{enumerate} 

Let us give two examples with particular choices of data space $\mathcal{X}$, metric $d$, and barycenter operation, which will be of interest in the sequel:

\begin{example}\label{example_KMeans_Euclidean}
Let $\mathcal{X}=\R^d$ and $d(x,y)=\lVert x-y\rVert_2$, the Euclidean space and distance. Then \eqref{general_KMeans} simplifies to the classical $k$-means problem in Euclidean space,
\begin{align*} 
\min\limits_{\substack{c_j\in\R^d\\ a\in[k]^N}}\sum_{i=1}^N\lVert x_i-c_{a_i}\rVert_2^2.
\end{align*}
The barycenter function is the Euclidean mean/average.
\end{example}

\begin{example}[Wasserstein $k$-means]\label{example_Wasserstein_KMeans}
Here $\mathcal{X} = \mathcal{P}_2(\R^d)$, the space of probability measures on $\R^d$ with finite second moments, and $d$ is taken to be the Wasserstein distance $ W_2$ (recalled in \eqref{eq:Wasserstein_dist} below). We will henceforth write $\mu_i=x_i$ for $i\in[N]$. Then \eqref{general_KMeans} turns into 
\begin{align*} 
\min\limits_{\substack{\nu_j\in\mathcal{P}_2(\R^d)\\ a\in[k]^N}}\sum_{i=1}^NW_2^2(\mu_i,\nu_{a_i}).
\end{align*}
As barycenter one takes the Wasserstein barycenter, i.e.\ the barycenter updating step \eqref{eq:centroid_general_kmeans} reads 
\begin{align} 
\nu_j\in\argmin\limits_{\nu\in\mathcal{P}_2(\R^d)}\sum_{i:a_i = j}W_2^2(\mu_i,\nu).
\end{align}
\end{example}
Subsequently, we want to generalize the previous examples, in particular Example \ref{example_Wasserstein_KMeans}, to the case in which certain coordinates/marginals are missing for some observations, for an illustration see \Cref{fig:6measures_clustering} in Section \ref{sec:NA_WKMeans}.

When applying Lloyd's algorithm to a specific setup, one needs to specify a distance function and a notion of barycenter. In the case of a metric space it is natural to take a Fr\'echet mean for the barycenter operation, see \cite{Fr48}. In the case of missing coordinates, however, the data does not come from one metric space but from several different spaces, and hence several distances need to be considered simultaneously. In particular, it is also necessary to consider a generalized notion of barycenter. 
We will introduce an approach to this challenge in the next section and discuss our specific examples in more detail in Example \ref{ex:NA_KMeans} and Section \ref{sec:NA_WKMeans}, respectively. 

\section{Clustering Projected Elements of Metric Spaces}\label{sec:clustering_proj_metric}

Suppose we are working in a metric space $(\X,d)$ and have points $x_1,\ldots,x_N\in\X$ that we want to assign to $k$ clusters. However, what we actually observe are the points $\tilde{x}_i:=\varphi_i(x_i)$, where $\varphi_i:\X\to\X_i$ is a known map into another metric space $\X_i$. For instance, in the case $\X=\R^d$ the functions $\varphi_i$ might be projections onto subspaces. Having observed $\tilde{x}_i\in\X_i$ define for $y\in \X$
\begin{align*} 
d_i(\tilde{x}_i,y) &:= d(\varphi_i^{-1}(\tilde{x}_i),y)\\&:=\inf\limits_{x\in\X:\varphi_i(x)=\tilde{x}_i} d(x,y),
\end{align*}
which serves as a type of dissimilarity measure of a point $y\in\X$ in the ``full'' metric space to the observed point $\tilde{x}_i$. 

\subsection{Problem and Algorithm}\label{sec:problem_algo_general_metric}
Based on these notations, we introduce the problem
\begin{align}\label{general_KMeans_onmapped_points} 
\min\limits_{\substack{c_j\in\X\\ a\in[k]^N}}\sum_{i=1}^N d_i(\tilde{x}_i,c_{a_i})^2.
\end{align}
I.e.\ we want to find optimal cluster barycenters $c_1,\ldots,c_k\in\X$, as well as a vector $a$ that optimally assigns each observed point to a cluster. It is important to note that we are looking for cluster barycenters in the ``full'' space $\X$. This ensures that we are be able to compare them to each observed point $\tilde{x}_i$ using $d_i(\tilde{x}_i,\cdot)$.

We propose tackling \eqref{general_KMeans_onmapped_points} using the following two steps which are iterated in a Lloyd algorithm fashion after initializing the barycenters:
\begin{enumerate}[i), nosep]
\item \label{general_label_update}assignment step: given barycenters $c_j\in\X$, set
\begin{align}
    a_i \in \argmin\limits_{j\in[k]}d_i(\tilde{x}_i,c_j),
\end{align}
\item \label{general_bary_update}barycenter step: given assignment $a$, update the barycenters, i.e.\ for each $j\in[k]$, set 
\begin{align}\label{eq:general_bary_update}
    c_j\in\argmin\limits_{y\in\X}\sum\limits_{i:a_i=j} d_i(\tilde{x}_i,y)^2.
\end{align}

\end{enumerate}

Let us note that step \ref{general_bary_update} does not necessarily admit minimizers. However, in our two applications --\ NA $k$-means (cf.\ Example \ref{ex:NA_KMeans}) and NA Wasserstein $k$-means (cf.\ Section \ref{sec:NA_WKMeans}) -- minimizers always exist. A more precise discussion of this technical point is given in Appendix \ref{sec:ex_barys}. We use the abbreviation ``NA'' for ``Not Available''. Thus, NA $k$-means and NA Wasserstein $k$-means refer to the respective algorithms of Examples \ref{example_KMeans_Euclidean} and \ref{example_Wasserstein_KMeans} adapted to handle missing coordinates. 

Concerning initialization, we (slightly) adapt the widely used $k$-means++ initialization algorithm introduced in \cite{ArVa07}. In its original form, the initial cluster barycenters are selected from the observed points themselves. The algorithm begins by choosing the first barycenter uniformly at random from the observed points and then repeatedly choosing points at random with probability proportional to their squared distance from the already chosen barycenters until $k$ barycenters are chosen. This approach is not directly applicable to our setting because not all points are fully observed, and thus not all pairwise distances can be computed. 
Therefore we apply the $k$-means++ initialization algorithm to the subset of fully observed points only, i.e.\ those points $x_i$ with $\varphi_i = \mathrm{Id}_\X$.

\subsection{Using Clustering for Imputation}\label{sec:imputing_general}

After clustering we obtain a vector of assignments $a$ and barycenters $c_1,\ldots,c_k$. We can use these to impute the not fully observed points. For this sake define
\begin{align}\label{def_of_index_sets_missing_full} 
I_f:=\{i:\varphi_i = \mathrm{Id}_\X\},\quad I_m:=[N]\setminus I_f,
\end{align}
i.e.\ respectively the set of indices of points being observed in $\X$ (i.e.\ fully observed data) and the set of indices of points that are only observed after some non-trivial map (i.e.\ with missing data).
The final clusters are defined by letting for $j\in[k]$ 
\begin{align*} 
C_j := \{i\in[N]:a_i = j\}.
\end{align*}

We want to use the clusters to find, for a point $\tilde{x}_i$ with $i\in I_m$, a probability measure on $\X$, i.e.\ an element of $\mathcal{P}(\X)$, which attempts to concentrate around the true (unobserved) point $x_i$. We define for $i\in I_m$ the set of indices we use for filling up the missing information of $\tilde{x}_i$, as
\begin{align*} 
J_i:=I_f\cap C_{a_i}.
\end{align*}
This says that we want to use the ``full'' points in the same cluster as $\tilde{x}_i$ in order to compensate the incomplete information that we have about $\tilde{x}_i$. It can of course happen that $J_i=\emptyset$. In this case we use the corresponding cluster barycenter $c_{a_i}$ to complement the information about $\tilde{x}_i$. In the following we assume $J_i\neq\emptyset$.
Taking now some $x_\ell$ with $\ell\in J_i$ we set
\begin{align} \label{eq:fill_up_point}
y^i_\ell\in \argmin\limits_{\substack{x\in\X\\ \varphi_i(x)=\tilde{x}_i}} d(x,x_\ell),
\end{align}
which will be one potential choice of ``filling up''. We introduce the shorthand $D^i_\ell:=d(y^i_\ell,x_\ell)=d_i(\tilde{x}_i,x_\ell)$. In order to determine the weights of a probability measure, we set $p^i_\ell:=f(D^i_\ell)$ with $f:[0,\infty)\to [0,\infty)$ being some positive decreasing function fixed in advance, e.g.\ $f(x):= \exp(-x^2)$. This way $p^i_\ell\geq 0$ and also $\sum_{\ell\in J_i}p^i_\ell=1$ after possibly renormalizing the weights by a positive constant. Having obtained the weights, we define the probability measure which should represent a \emph{randomly reconstructed} $x_i$ as
\begin{align}\label{eq:imputed_as_measure_abstract_metric_space} 
\theta_i:=\sum_{\ell\in J_i}p^i_\ell\delta_{y^i_\ell}.
\end{align}
To also embed the fully observed points, for $i\in I_f$ we set $\theta_i:=\delta_{x_i}$, that is, the Dirac delta concentrated on $x_i$. For $i\in I_m $ with $J_i=\emptyset$, we use the corresponding cluster barycenter $c_{a_i}$ and set $\theta_i $ to be the Dirac delta concentrated on some minimizer of $d(\varphi_i^{-1}(\tilde{x}_i),c_{a_i})$. Thus, we embed all observed points $\tilde{x}_1,\ldots,\tilde{x}_N$ in the same space $\mathcal{P}(\X)$. In other words, we identify the possibly unobserved point $x_i$ with a probability measure $\theta_i$. 

In order to compare the hitherto constructed probability measures, as we will need to do in Section \ref{sec:exp_results}, we define a (generalized) metric $\rho$ on $\mathcal{P}(\X)$. Readers primarily interested in clustering and imputation may skip this construction. The (generalized) metric $\rho$ is defined via the cost induced by the \emph{product} or \emph{independent coupling}. 
 That is, for $\mu,\nu\in\mathcal{P}(\X)$ we set $\rho(\mu,\nu):=0$ if $\mu=\nu$ and otherwise 
\begin{gather}
\begin{aligned}\label{prod_dist_general} 
\rho(\mu,\nu) :=\int_{\X}\int_{\X}d(x,x')\,\mu(\mathrm{d}x)\,\nu(\mathrm{d}x').
\end{aligned}
\end{gather}
For completeness we provide a short lemma proving that $\rho$ is indeed a (generalized) metric.

\begin{lemma}\label{lem:rho_gen_metric}
Let $(\X,d)$ be a metric space and define on $\mathcal{P}(\X)$ the map $\rho:\mathcal{P}(\X)\times\mathcal{P}(\X)\to [0,\infty]$ by
\begin{align*} 
\rho(\mu,\nu):=
\begin{cases} 
\int_{\X\times\X}d(x,x')\,(\mu\otimes\nu)(\mathrm{d}x,\mathrm{d}x'), & \text{if } \mu\neq \nu\\ 
0, & \text{if } \mu=\nu.
\end{cases}
\end{align*}
Then, $\rho$ is a generalized metric on $\mathcal{P}(\X)$.
\end{lemma}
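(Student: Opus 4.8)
The plan is to verify the three defining properties of a generalized metric: non-negativity and the identity of indiscernibles, symmetry, and the triangle inequality. Non-negativity is immediate since $d \geq 0$ and the integral of a non-negative function is non-negative. The identity of indiscernibles holds by construction: $\rho(\mu,\nu) = 0$ whenever $\mu = \nu$ by definition, and conversely, if $\mu \neq \nu$ we need $\rho(\mu,\nu) > 0$. This is the only mildly delicate point among the ``easy'' axioms: if $\mu \neq \nu$, there is a bounded continuous function separating them, hence (after an approximation argument or by picking points in the supports that differ) there exist $x, x'$ with $x \neq x'$ that are charged, so the integrand $d(x,x')$ is strictly positive on a set of positive $\mu \otimes \nu$-measure; alternatively one observes $\rho(\mu,\nu) \geq \int d(x,x')\,\mathrm{d}\mu\otimes\nu \geq$ something like a lower bound in terms of $W_1$, but the cleanest route is just: $\mu \neq \nu$ forces the diagonal to have $\mu\otimes\nu$-measure less than one (unless both are the same Dirac, which is excluded), and off the diagonal $d > 0$. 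Symmetry of $\rho$ is immediate from symmetry of $d$ and the symmetry of the product measure under swapping coordinates, handling the $\mu = \nu$ case separately (where both sides are $0$).

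The triangle inequality $\rho(\mu,\nu) \leq \rho(\mu,\lambda) + \rho(\lambda,\nu)$ is the main obstacle, precisely because of the special casing at equality. I would split into cases according to which of $\mu, \nu, \lambda$ coincide. If all three are equal, or if $\mu = \nu$, the inequality is trivial ($0 \leq$ non-negative). The substantive case is $\mu \neq \nu$. Here, if also $\mu \neq \lambda$ and $\lambda \neq \nu$, then all three pairwise $\rho$-values are given by the integral formula, and the inequality follows by integrating the pointwise triangle inequality $d(x,x') \leq d(x,x'') + d(x'',x')$ against the product measure $\mu \otimes \lambda \otimes \nu$ (on $\X^3$) and using the marginal/Fubini identities: $\int d(x,x')\,\mathrm{d}\mu\otimes\nu = \int d(x,x')\,\mathrm{d}\mu\otimes\lambda\otimes\nu$ since $d(x,x')$ does not depend on the middle variable, and similarly $\int d(x,x'')\,\mathrm{d}\mu\otimes\lambda\otimes\nu = \rho(\mu,\lambda)$, $\int d(x'',x')\,\mathrm{d}\mu\otimes\lambda\otimes\nu = \rho(\lambda,\nu)$. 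The remaining sub-cases are $\lambda = \mu$ (but $\mu \neq \nu$), where the claim reduces to $\rho(\mu,\nu) \leq 0 + \rho(\mu,\nu)$, trivially true; and $\lambda = \nu$ (but $\mu \neq \nu$), symmetric. So the only genuine computation is the three-variable Fubini argument, which is routine.

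One subtlety worth a sentence in the write-up: $\rho$ may take the value $+\infty$ (e.g.\ if $\mu, \nu$ have infinite first moments), which is why the statement only claims a \emph{generalized} metric; the triangle inequality and all the manipulations above remain valid in $[0,\infty]$ since we never subtract. I would also note that on $\mathcal{P}_2(\X)$ (or more precisely whenever the relevant integrals are finite) the value is finite, which is the case of interest for the applications. I expect the whole proof to be short; the care is entirely in bookkeeping the equality cases so that the piecewise definition does not break the triangle inequality.

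\begin{proof}
Non-negativity is clear since $d \geq 0$. If $\mu = \nu$ then $\rho(\mu,\nu) = 0$ by definition. Conversely, suppose $\mu \neq \nu$ and assume for contradiction $\rho(\mu,\nu) = 0$. Then $d(x,x') = 0$ for $\mu\otimes\nu$-almost every $(x,x')$, i.e.\ $x = x'$ for $\mu\otimes\nu$-a.e.\ $(x,x')$, which forces $\mu \otimes \nu$ to be concentrated on the diagonal. A product measure concentrated on the diagonal must be a Dirac mass $\delta_{(z,z)}$, whence $\mu = \nu = \delta_z$, contradicting $\mu \neq \nu$. Hence $\rho(\mu,\nu) = 0$ if and only if $\mu = \nu$.

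Symmetry: if $\mu = \nu$ both sides vanish; if $\mu \neq \nu$, then the pushforward of $\mu\otimes\nu$ under the coordinate swap is $\nu\otimes\mu$, and $d(x,x') = d(x',x)$, so $\rho(\mu,\nu) = \rho(\nu,\mu)$.

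Triangle inequality: fix $\mu,\nu,\lambda \in \mathcal{P}(\X)$; we show $\rho(\mu,\nu) \leq \rho(\mu,\lambda) + \rho(\lambda,\nu)$. If $\mu = \nu$ the left side is $0$ and we are done. Assume $\mu \neq \nu$. If $\lambda = \mu$, the right side equals $0 + \rho(\mu,\nu) = \rho(\mu,\nu)$; if $\lambda = \nu$, it equals $\rho(\mu,\nu) + 0$; in both cases the inequality holds with equality. Finally suppose $\mu,\nu,\lambda$ are pairwise distinct. Consider the product measure $\mu\otimes\lambda\otimes\nu$ on $\X^3$. Since $(x,x') \mapsto d(x,x')$ does not depend on the second variable, Fubini gives
\begin{align*}\textstyle
\rho(\mu,\nu) = \int_{\X^3} d(x,x')\,\mathrm{d}(\mu\otimes\lambda\otimes\nu)(x,x'',x').
\end{align*}
Likewise $\rho(\mu,\lambda) = \int_{\X^3} d(x,x'')\,\mathrm{d}(\mu\otimes\lambda\otimes\nu)$ and $\rho(\lambda,\nu) = \int_{\X^3} d(x'',x')\,\mathrm{d}(\mu\otimes\lambda\otimes\nu)$. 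Integrating the pointwise triangle inequality $d(x,x') \leq d(x,x'') + d(x'',x')$ against $\mu\otimes\lambda\otimes\nu$ (all integrands non-negative, so the inequality is preserved in $[0,\infty]$) yields $\rho(\mu,\nu) \leq \rho(\mu,\lambda) + \rho(\lambda,\nu)$. Thus $\rho$ is a generalized metric on $\mathcal{P}(\X)$.
\end{proof}
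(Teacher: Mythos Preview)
Your proof is correct and follows essentially the same approach as the paper's: symmetry from symmetry of $d$, the triangle inequality by integrating the pointwise triangle inequality against the threefold product measure (equivalently, taking three independent random variables), and definiteness by arguing that $\rho(\mu,\nu)=0$ with $\mu\neq\nu$ forces $\mu\otimes\nu$ to concentrate on the diagonal, hence $\mu=\nu$. Your write-up is a bit more explicit about the degenerate cases in the triangle inequality (the paper simply remarks ``otherwise there is nothing to prove''), and your definiteness argument goes slightly further than needed (you conclude $\mu=\nu=\delta_z$, whereas $\mu=\nu$ alone already yields the contradiction), but these are cosmetic differences.
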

\begin{proof}
Symmetry of the generalized metric is clear due to the symmetry of the underlying distance $d$. Concerning the triangle inequality take $\mu,\nu,\theta\in\mathcal{P}(\X)$ which we assume to be different from each other (otherwise there is nothing to prove). Furthermore, take three independent random variables $X\sim\mu,Y\sim\nu,Z\sim\theta$. Then,
\begin{align*} 
\rho(\mu,\theta) &=   \int_{\X\times\X}d(x,x')\,(\mu\otimes\theta)(\mathrm{d}x,\mathrm{d}x')\\
&= \E[d(X,Z)] \\
&\leq    \E[d(X,Y)+d(Y,Z)] = \rho(\mu,\nu) + \rho(\nu,\theta),
\end{align*}
which proves the triangle inequality for $\rho$.
Concerning definiteness, if $\mu=\nu$, we have $\rho(\mu,\nu)=0$ by definition. 

Suppose now that
\begin{equation*}
    \int_{\X\times\X}d(x,x')\,(\mu\otimes\nu)(\mathrm{d}x,\mathrm{d}x') = 0.
\end{equation*}
This implies $d(x,x')=0$ for $\mu\otimes\nu$-almost all $(x,x')$. Since $d$ is a metric, we have $x=x'$, $\mu\otimes\nu$-almost surely. Thus, $\mu=\delta_x=\nu$ for some $x\in \X$.
\end{proof}
Using a Wasserstein distance or a similar notion of distance in \eqref{prod_dist_general}, imputed points would be biased to be closer than ``fully'' observed points. The metric $\rho$ is designed to avoid this type of bias. Indeed, our choice in \eqref{prod_dist_general} formalises the idea that missing values are not imputed in a deterministic sense but in a \emph{random} or \emph{soft} fashion, as indicated in the introduction. The distance between two randomly imputed values is then estimated as an \emph{independent} average of distances, following the rationale that the imputation for one point does not inform the imputation for a different point. 

\begin{example}[NA $k$-means]\label{ex:NA_KMeans}
Corresponding to classical $k$-means, i.e.\ Example \ref{example_KMeans_Euclidean}, consider $\X = \R^d$. Let $\varphi_i = P_i :\R^d\to\R^{d_i}$ be projections onto some of the coordinates. Then, $\tilde{x}_i=P_i(x_i)$ and \eqref{general_KMeans_onmapped_points} becomes
\begin{align}\label{eq:NA-kmeans} 
\min\limits_{\substack{c_j\in\R^d\\a\in[k]^N}}\sum_{i=1}^N \lVert \tilde{x}_i-P_i(c_{a_i})\rVert_2^2.
\end{align}
The cluster assignment iteration step assigns each point to the cluster whose barycenter is closest, considering only the known coordinates.
The barycenter updating step is solved by
\begin{equation*}
    \Big(\sum_{i:a_i=j}P_i^TP_i\Big)^{-1}\sum_{i:a_i=j}P_i^TP_i(x_i),
\end{equation*} 
which, in each coordinate, corresponds to the average of all the points in the cluster for which that coordinate is available. We detail in Section \ref{sec:NA_WKMeans} how to deal with the case where the inverse does not exit. By imputing missing values as described above, we obtain measures in $\mathcal{P}_2(\R^d)$, of which we can then calculate pairwise distances using the metric $\rho$, defined in \eqref{prod_dist_general}. Experiments using this method may be found in Section \ref{sec:GMM_Rec}.

It is worth noting that \cite{ChChBa16} considers the same loss function, i.e.\ \eqref{eq:NA-kmeans}, but proposes a different algorithm for clustering Euclidean points with missing values, known as $k$-pod. In the Supplementary Material a comparison to their algorithm can be found when we view our method solely as clustering algorithm. Notably, our proposed method, NA $k$-means, consistently outperforms $k$-pod.
\end{example}

Before discussing our second example in Section \ref{sec:NA_WKMeans}, which generalizes \Cref{example_Wasserstein_KMeans}, we first recall the necessary notions from optimal transport theory.

\section{Wasserstein Distance and Generalized Wasserstein Barycenter: a Summary}
\subsection*{Optimal Transport and Wasserstein Distance}
Let $ \mu, \nu$ be probability measures on Polish spaces $X,Y$ respectively. For a measurable map $T:X\to Y$ we use $_\#$ to denote the push-forward operator of measures (image measure), i.e.\ for a measurable set $B\subset Y,$ we put $\push{T}{\mu}(B):=\mu(T^{-1}(B))$.
Denote by
\begin{align*} 
\Pi(\mu,\nu):=\{\pi\in\mathcal{P}(X\times Y) : \push{\mathrm{proj}_X}{\pi}=\mu,\push{\mathrm{proj}_Y}{\pi}=\nu\}
\end{align*}
the set of couplings of $\mu$ and $\nu$, i.e.\ the set of all measures on the product space having $\mu$ and $\nu$ as marginals. 
The Kantorovich problem for a cost function $c:X\times Y\to\R_+$, introduced in \cite{Ka42}, is
\begin{align*} 
\inf_{\pi\in\Pi(\mu,\nu)}\int_{X\times Y} c(x,y) \,\pi(\mathrm{d}x,\mathrm{d}y). \tag{KP}
\end{align*}
We specialize to the case $X=Y=\R^d$ and $c(x,y)=\lVert x-y\rVert_p^p$ for $p\in [1,\infty)$. The Wasserstein distance $W_p$ on the space $\mathcal{P}_p(\R^d)$ of probability measures with finite $p$-moment is then defined via
\begin{align}\label{eq:Wasserstein_dist} 
W_p(\mu,\nu)^p:=\inf\limits_{\pi\in\Pi(\mu,\nu)}\int_{\R^d\times\R^d}\lVert x-y\rVert_p^p\ \pi(\mathrm{d}x,\mathrm{d}y).
\end{align}

\subsection*{Wasserstein Barycenter}
A notion of averaging probability measures that has recently received significant attention is the concept of Wasserstein barycenters, introduced by Agueh and Carlier \cite{AgCa11}. 
A Wasserstein barycenter of $\mu_1,\ldots,\mu_n\in\mathcal{P}_2(\R^d)$ with weights $\lambda_1,\ldots,\lambda_n\geq  0$ summing to $1$, is a solution of
\begin{align*} 
\inf\limits_{\nu\in\mathcal{P}_2(\R^d)}\sum_{i=1}^n\lambda_iW_2^2(\mu_i,\nu).
\end{align*}
Since we want to generalize the algorithm introduced in Sections \ref{sec:KMeans_abstract}-\ref{sec:clustering_proj_metric} to the setting of probability measures, we require a variant of the Wasserstein barycenter that is still applicable when only some coordinates of the measures are known. A suitable concept, the \emph{generalized Wasserstein barycenter}, was introduced by Julien, Gozlan and Saint-Dizier in \cite{JuGoSa21}. 
To formally define it, let probability measures $\mu_1,\ldots,\mu_n\in\mathcal{P}_2(\R^d)$ be given along with linear maps $P_i:\R^d\to\R^{d_i}$. In our intended applications, these maps correspond to projections onto some of the coordinates. A generalized Wasserstein barycenter of the push-forwarded measures $\push{P_1}{\mu_1},\ldots,\push{P_n}{\mu_n}$ with associated weights $\lambda_1,\ldots,\lambda_n\geq 0$ summing to $1$, is a solution of
\begin{align}\label{Generalized_Wasserstein_Barycenter} 
\inf\limits_{\nu\in\mathcal{P}_2(\R^d)}\sum_{i=1}^n \lambda_iW_2^2(\push{P_i}{\mu_i},\push{P_i}{\nu}).
\end{align}
To solve the problem, it is useful to reformulate it as a classical Wasserstein barycenter problem. By associating each projection $P_i$ with a matrix $P_i\in\R^{d_i\times d}$, we define $A:=\sum_{i=1}^n \lambda_iP_i^TP_i$
and assume in the following that $A$ is invertible. Next, we set $\bar{\mu}_i:=\push{(A^{-1/2}P_i^T)}{\mu_i}\in\mathcal{P}_2(\R^d)$ for $i\in[n]$, and consider the classical Wasserstein barycenter problem
\begin{align}\label{classical_bary_problem} 
\inf\limits_{\bar{\nu}\in\mathcal{P}_2(\R^d)}\sum_{i=1}^n\lambda_iW_2^2(\bar{\mu}_i,\bar{\nu}).
\end{align}
Then, $\nu$ is a solution to \eqref{Generalized_Wasserstein_Barycenter} if and only if $\bar{\mu}=\push{A^{1/2}}{\nu}$ is a solution to \eqref{classical_bary_problem}, see Proposition 3.1 in \cite{JuGoSa21}. Regarding computational aspects, especially in the discrete case, efficient algorithms for computing Wasserstein barycenters have already been developed; see e.g.\ \cite{CuDo14}, as well as \cite{AlBaCuMa16}, and are implemented in the Python package \texttt{POT} (see \cite{FlCoGrAlBoChChCoFaFoGaGaJaRaReRoScSeSuTaToVa21}).

\section{NA WASSERSTEIN \texorpdfstring{$k$}k-MEANS}\label{sec:NA_WKMeans}
\subsection*{Algorithm Description}
We can now discuss the case of $\X = \P_2(\R^d)$ in detail, as we have established the two necessary operations — the Wasserstein distance and the generalized Wasserstein barycenter — for an algorithm as described in \Cref{sec:clustering_proj_metric}.

Suppose we want to cluster probability measures $\mu_1,\ldots,\mu_N\in\mathcal{P}_2(\R^d)$ into $k$ clusters but we only observe their push-forwards $\push{P_i}{\mu_i}\in\mathcal{P}_2(\R^{d_i})$ under projections $P_i:\R^d\to\R^{d_i}$ for $i\in [N]$. Thus, in the language of \Cref{sec:clustering_proj_metric}, we have $\varphi_i(\cdot): = \push{P_i}{(\cdot)}$, and $\tilde{x}_i=\push{P_i}{\mu_i}=:\tilde{\mu}_i$. Problem \eqref{general_KMeans_onmapped_points} then reads as 
\begin{align*} 
\inf\limits_{\substack{\nu_j\in\mathcal{P}_2(\R^d)\\a\in[k]^N}}\sum_{i=1}^NW_2^2(\tilde{\mu}_i,\push{P_i}{\nu_{a_i}}).
\end{align*}
We can tackle this problem by the iterations suggested in Section \ref{sec:problem_algo_general_metric}, which read as
\begin{enumerate}[i), nosep]
\item update cluster assignments given barycenters $\nu_j$, i.e.\ for each $i\in[N]$ set
\begin{align}\label{NA_WKMeans_assignment steps} 
a_i\in\argmin\limits_{j\in[k]}W_2(\tilde{\mu}_i,\push{P_i}{\nu_j}),
\end{align}
\item \label{NAWKMeans_bary_update_step} update cluster barycenters given an assignment $a$, i.e.\ for each $j\in[k]$ set
\begin{align*} 
\nu_j\in\argmin\limits_{\nu\in\mathcal{P}_2(\R^d)}\sum\limits_{i:a_i=j}W_2^2(\tilde{\mu}_i,\push{P_i}{\nu}).
\end{align*}
\end{enumerate}
Again, we aim to find an optimal assignment vector $a$ that assigns each measure to one of the $k$ clusters, as well as optimal barycenters, specifically generalized Wasserstein barycenters, $\nu_1,\ldots,\nu_k$. Note that once again, we are looking for barycenters in the full space $\P_2(\R^d)$, i.e.\ having all coordinates.

It may happen that a cluster consists entirely of measures missing the same coordinate, i.e.\ for some $j\in[k]$, we have 
\begin{equation*}
    \bigcap_{i:a_i=j} \ker P_i \neq \{0\}.
\end{equation*}
To avoid numerical problems in such cases, one can initialize the barycenters without missing coordinates and adapt step \ref{NAWKMeans_bary_update_step} to include the previous barycenter with a small weight. This ensures that all subsequent barycenters have no missing coordinates. Specifically, assume that at iteration $t$ the barycenters are $\nu^{(t)}_j\in\mathcal{P}_2(\R^d)$ and a new assignment vector $a^{(t+1)}\in[k]^N$ has been computed. Given a weight $\lambda^{(t)}\in (0,1)$, set 
\begin{align} \label{eq:bary_update_with_weight}
\nu_j^{(t+1)}\in\argmin\limits_{\nu\in\mathcal{P}_2(\R^d)}(1-\lambda^{(t)})\sum\limits_{i:a_i^{(t+1)}=j} W_2^2(\tilde{\mu}_i,\push{P_i}{\nu}) + \lambda^{(t)}W_2^2(\nu_j^{(t)},\nu).
\end{align}
Experimentally we have observed that $\lambda^{(t)}:=\frac{1}{(t+1)^{1/2}}$ works well. Having defined the two iterative steps, we now present the main computational contribution of this work: Algorithm \ref{alg:NA_WKMeans}. If no prior information is available, we propose initializing it as described in \Cref{sec:problem_algo_general_metric}.

\begin{algorithm}[ht]
   \caption{NA Wasserstein $k$-means}\label{alg:NA_WKMeans}
\begin{algorithmic}
   \STATE {\bfseries Input:} $N$ observed measures $\tilde{\mu}_i=\push{P_i}{\mu_i}$, number of clusters $k$, maximum number of iterations $T$, weighting schedule $(\lambda^{(t)})_{t=0}^T$
   \STATE {\bfseries Result:} barycenters $\nu_j$, assignments $a$
   \STATE Initialize $t=0,~\nu_1^{(0)},\ldots,\nu_k^{(0)}\in\mathcal{P}_2(\R^d)$, and $a^{(0)}\in[k]^N$
   \WHILE{$t<T$}
   \FOR{$i=1$ {\bfseries to} $N$}
   \STATE \vspace*{-6mm}\begin{gather}\begin{aligned}\label{eq_algo_assignment} 	a_i^{(t+1)}:=\argmin\limits_{j\in[k]}W_2^2(\tilde{\mu}_i,\push{P_i}{\nu_j^{(t)}})\end{aligned}\end{gather}\vspace*{-7mm}
   \ENDFOR
   \FOR{$j=1$ {\bfseries to} $K$}
   \STATE choose $\nu_j^{(t+1)}$ according to \eqref{eq:bary_update_with_weight} by including the old barycenter $\nu_j^{(t)}$ with weight $\lambda^{(t)}$
   \ENDFOR
   \IF{$a^{(t)} == a^{(t+1)}$}
   \STATE {\bfseries break}
   \ENDIF
   \STATE $t:=t+1$
   \ENDWHILE
\end{algorithmic}
\end{algorithm}

In \eqref{eq_algo_assignment} we use the entry of the old assignment vector $a_i^{(t)}$ if it is a minimizer.
This leads to a simple convergence result. For this sake let
\begin{align}\label{eq:Loss_NA_Wasserstein_kmeans} 
L(\nu_1,\ldots,\nu_K,a):=\sum_{i=1}^NW_2^2(\push{P_i}{\mu_i},\push{P_i}{\nu_{a_i}}).
\end{align}
\begin{theorem}\label{thm_algo_convergence}
Given an initialization $\nu_1^{(0)},\ldots,\nu_k^{(0)}\in\mathcal{P}_2(\R^d),~a^{(0)}\in[k]^N$, Algorithm \ref{alg:NA_WKMeans} strictly decreases $L$ until it terminates after finitely many steps, for any choice of $(\lambda^{(t)})_{t\in\N}$ (even for $T=\infty$).
If in every iteration each cluster has a measure with all coordinates, then Algorithm \ref{alg:NA_WKMeans} with $\lambda^{(t)}=0$ yields a local minimum of $L$.
\end{theorem}

\begin{remark}[Speed of convergence]
    It is known that the classical Euclidean $k$-means algorithm can require an exponential number of iterations in the worst-case. Specifically, there exists a lower bound $2^{\Omega(N)}$ even in two dimensions, see \cite{Va11, DaSe06}. Since Algorithm \ref{alg:NA_WKMeans} includes $k$-means as a special case (specifically, when applied to Dirac measures, i.e.\ Euclidean data, without missing values), this worst-case behavior also applies to our method. It cannot be worse than exponential, however, due to the trivial upper bound $k^N$ of possible cluster assignments.

    Nevertheless, in practice, $k$-means typically converges within a moderate number of iterations - often around $20$ to $50$ - when clustering a not-too-large number of objects, see \cite{HaSa05, BrGaJoVa14}. In our experiments, we observed a similar behavior for both NA Wasserstein $k$-means and NA $k$-means. On average, the number of iterations hovered around $30$ iterations without exceeding $50$. Specifically, in all experiments, we set the maximum number of iterations to $100$ and this bound was never reached.
\end{remark}

\begin{proof}[Proof of Theorem \ref{thm_algo_convergence}.] 
Let us first show that the loss function decreases monotonically. Using the assignment step in the first and the barycenter updating step in the second inequality, we obtain
\begin{align*} 
(1-\lambda^{(t+1)})L(\nu_1^{(t)},\ldots,\nu_k^{(t)},a^{(t)}) 
=(1-\lambda^{(t+1)}) \sum_{i=1}^NW_2^2\big(\push{P_i}{\mu_i},\push{P_i}{\nu_{a_i^{(t)}}^{(t)}}\big) \\ 
\overset{(\star)}{\geq} (1-\lambda^{(t+1)})\sum_{i=1}^NW_2^2\big(\push{P_i}{\mu_i},\push{P_i}{\nu^{(t)}_{a_i^{(t+1)}}}\big) \\ 
= \sum_{j=1}^k \Big[ (1-\lambda^{(t+1)})\sum_{i:a_i^{(t+1)}=j}W_2^2(\push{P_i}{\mu_i},\push{P_i}{\nu_j^{(t)})}+\lambda^{(t+1)}\underbrace{W_2^2(\nu_j^{(t)},\nu_j^{(t)})}_{=0}\Big] \\ 
\geq \sum_{j=1}^k \Big[ (1-\lambda^{(t+1)})\sum_{i:a_i^{(t+1)}=j}W_2^2(\push{P_i}{\mu_i},\push{P_i}{\nu_j^{(t+1)}}) +\lambda^{(t+1)}\underbrace{W_2^2(\nu_j^{(t)},\nu_j^{(t+1)})}_{\geq 0}\Big] \\ 
\geq (1-\lambda^{(t+1)})\sum_{i=1}^NW_2^2\big(\push{P_i}{\mu_i},\push{P_i}{\nu_{a_i^{(t+1)}}^{(t+1)}}\big)\\ 
 = (1-\lambda^{(t+1)})L(\nu_1^{(t+1)},\ldots,\nu_k^{(t+1)},a^{(t+1)}).
\end{align*}
Since $1-\lambda^{(t+1)} > 0$, this shows the monotonicity. In inequality $(\star)$, equality holds if and only if there is no change in the assignment step, i.e. if
\begin{align*} 
a_i^{(t)} = a_i^{(t+1)},~\forall i\in[N].
\end{align*}
In this case the algorithm terminates. If there is some $i\in[N]$ such that $a_i^{(t)}\neq a_i^{(t+1)}$, then we have a strict inequality, since the assignment in Algorithm \ref{alg:NA_WKMeans}, cf.\ \eqref{eq_algo_assignment}, is only updated when $L$ decreases due to the new assignment.

Since only a finite number of possible assignments exist, due to the fact that we are clustering finitely many measures into finitely many clusters, the algorithm strictly decreases $L$ and terminates after a finite number of steps, thus concluding the proof.
\end{proof}

\begin{remark}[Convergence in abstract metric spaces]
    In the setting of abstract metric spaces, that is, Section \ref{sec:problem_algo_general_metric}, we can modify the assignment step as in Algorithm \ref{alg:NA_WKMeans} -- that is, assignments are only updated when the loss function strictly decreases. If we further assume that barycenters always exist, the convergence of the algorithm follows by the same arguments as in the previous proof.
\end{remark}

\subsection*{A Toy Example}

To illustrate the algorithm, consider six measures on the plane, one of which misses the vertical coordinate (indicated by vertical lines). Each measure consists of three support points with equal weight. The measures are visualized by different colors and depicted on the left of Figure \ref{fig:6measures_clustering}.
When clustering these measures into three clusters, the natural choice of clusters is evident. Applying the above algorithm indeed yields the results shown in Figure \ref{fig:6measures_clustering}.
To obtain these results, we calculated free support barycenters at each step of the algorithm with a fixed support size of three (cf.\ \cite{AlEs16,CuDo14}). For the blue and brown barycenters, each support point is precisely the average of the support points of the two measures in that cluster. For the pink barycenter the situation is slightly different. In the $y$ coordinate it inherits the values from the red measure, while as in the $x$ coordinate the support points are the averages of the $x$ coordinates of a red and a rose support point, respectively.
\begin{figure*}[t]
\centering
\begin{minipage}{.495\textwidth}
\centering
\includegraphics[width=.995\linewidth]{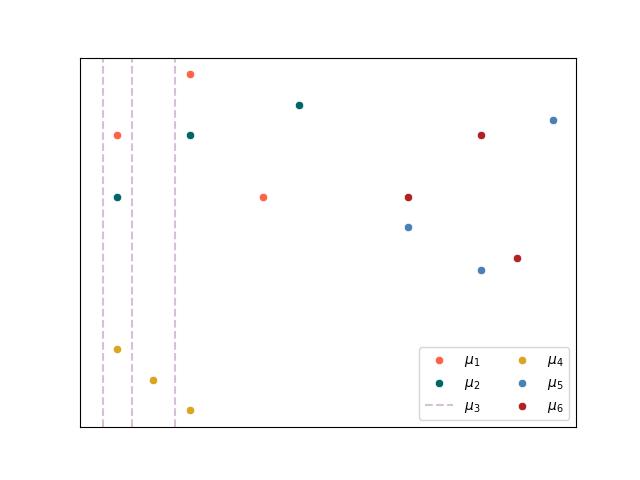}
\end{minipage}%
\begin{minipage}{.495\textwidth}
\centering
\includegraphics[width=.995\linewidth]{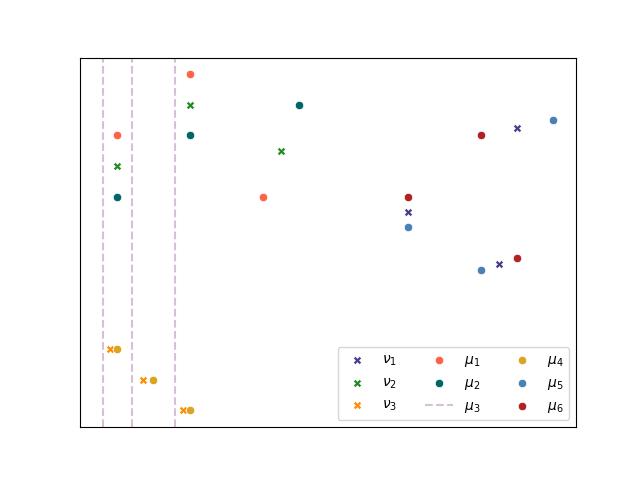}
\end{minipage}
\caption{(left): $6$ Measures on the Plane of Which One Misses the Vertical Coordinate of Its Points. (right) Clustering of the $6$ Measures Into $3$ Clusters.}\label{fig:6measures_clustering}
\end{figure*}

\subsection*{Imputation via NA Wasserstein \texorpdfstring{$k$}k-means}
Next, we consider how the imputation procedure described in Section \ref{sec:imputing_general} looks, when $\X = \mathcal{P}_2(\R^d)$. Having clustered measures $\push{P_i}{\mu_i}\in\mathcal{P}_2(\R^{d_i}),~i\in[N]$, we obtain generalized Wasserstein barycenters $\nu_1,\ldots,\nu_k\in\mathcal{P}_2(\R^d)$ and a vector of assignments $a\in[k]^N$. For measures that are not fully observed, i.e.\ for $i\in I_m$ (see \eqref{def_of_index_sets_missing_full} for the definition), we will use the fully observed measures in the same cluster, i.e.\ $\mu_\ell$ with $\ell\in J_i$, to impute $\push{P_i}{\mu_i}$ in a randomized fashion. Therefore, we have to interpret \eqref{eq:fill_up_point} in the current setting. Specifically, for $\mu_\ell$ with $\ell\in J_i$ we need to find
\begin{equation}\label{eq:imputed_measure}
    \eta_\ell^i\in\argmin\limits_{\substack{\eta \in \cP_2(\R^p)\\ \push{P_i}{\eta} = \push{P_i}{\mu_i}}}W_2^2(\eta, \mu_\ell).
\end{equation}
To obtain such $\eta_\ell^i$ we will apply the following lemma.
\begin{lemma}\label{lem:imputed_measure}
    Let $\mu_1\in\cP_2(\R^k),\nu\in\P_2(\R^d)$ with $d>k$. Denote by $\nu_1$ the projection on the first $k$ coordinates and by $\pi_1^\ast\in\Pi(\mu_1,\nu_1)$ the optimal transport coupling between $\mu_1$ and $\nu_1$. Then, the measure
    \begin{equation}\label{eq:lem_solution_to_projection_problem}
        \eta^\ast(\mathrm{d}x_1,\mathrm{d}x_2):=\int_{Y_1}\nu_{y_1}(\mathrm{d}x_2)\,\pi_1^\ast(\mathrm{d}x_1,\mathrm{d}y_1),
    \end{equation}
    solves
    \begin{equation}\label{eq:lemma_projection_problem}
        \min\limits_{\substack{\eta\in\cP_2(\R^d)\\ \eta_1=\mu_1}}W_2^2(\eta,\nu) = W_2^2(\mu_1,\nu_1).
    \end{equation}
\end{lemma}
\begin{proof}
    Let us first note that for any $\eta\in\cP_2(\R^d)$ with $\eta_1=\mu_1$ there is the trivial bound
    \begin{equation}
        W_2^2(\eta,\nu)\geq W_2^2(\eta_1,\nu_1) = W_2^2(\mu_1,\nu_1).
    \end{equation}
    We will show that $\eta^\ast$, as defined in \eqref{eq:lem_solution_to_projection_problem} achieves this lower bound, which will imply its optimality. In order to do this, we define a coupling $\pi\in\Pi(\eta^\ast,\nu)$ having those costs. In the following we split a point $z\in\R^d$ like $z=(z_1,z_2)\in\R^k\times\R^{d-k}$ and use the notations $X_1,X_2,Y_1,Y_2$ to emphasize over which space we are integrating, even though $X_1=Y_1=\R^k$ and $X_2=Y_2=\R^{d-k}$. Let us define $\pi$ by
    \begin{equation}
        \pi(\d x_1,\d y_1,\d x_2,\d y_2):=\pi_1^\ast(\d x_1,\d y_1)\nu_{y_1}(\d y_2)\delta_{y_2}(\d x_2).
    \end{equation}
    To see that this is actually a coupling of $(\eta^\ast,\nu)$, note that for the second marginal, we have
    \begin{align*}
        \int_{X_1\times X_2}\,\pi_1^\ast(\d x_1,\d y_1)\,\nu_{y_1}(\d y_2)\,\delta_{y_2}(\d x_2) &= \int_{X_1}\int_{X_2}\,\delta_{y_2}(\d x_2)\,\pi_1^\ast(\d x_1,\d y_1)\,\nu_{y_1}(\d y_2)\\
        & =\int_{X_1}\,\pi_1^\ast(\d x_1,\d y_1)\,\nu_{y_1}(\d y_2) \\
        &= \nu_1(\d y_1)\nu_{y_1}(\d y_2) = \nu(\d y_1,\d y_2).
    \end{align*}
    
    For the first marginal, we have
    \begin{align*}
        \int_{Y_1\times Y_2}\,\pi_1^\ast(\d x_1,\d y_1)\,\nu_{y_1}(\d y_2)\,\delta_{y_2}(\d x_2) &= \int_{Y_1}\int_{Y_2}\delta_{y_2}(\d x_2)\,\nu_{y_1}(\d y_2)\,\pi_1^\ast(\d x_1,\d y_1)\\
        &= \int_{Y_1}\nu_{y_1}(\d x_2)\,\pi_1^\ast(\d x_1,\d y_1) = \eta^\ast(\d x_1,\d x_2).
    \end{align*}

    Next, let us calculate the coupling's cost,
    \begin{align*}
        \int_{X\times Y}\lvert x-y\rvert^2\,\pi(\d x,\d y) &= \int_{X_1\times Y_1}\lvert x_1-y_1\rvert^2\,\pi_1^\ast(\d x_1,\d y_1) + \int_{X\times Y}\lvert x_2-y_2\rvert^2\, \pi(\d x, \d y) \\
        &=W_2^2(\mu_1,\nu_1) +\int_{X\times Y}\lvert x_2-y_2\rvert^2\,\pi(\d x,\d y),
    \end{align*}
    and note for the second term,
    \begin{align*}
        \int_{X\times Y}\lvert x_2-y_2\rvert^2\,\pi(\d x,\d y) &= 
        \int_{X_1\times Y_1}\int_{Y_2}\int_{X_2}\lvert x_2-y_2\rvert^2\,\delta_{y_2}(\d x_2)\,\nu_{y_1}(\d y_2)\,\pi_1^\ast(\d x_1,\d y_1)\\
        &= 0.
    \end{align*}
    Therefore, $W_2(\eta^\ast,\nu) = W_2(\mu_1,\nu_1)$, showing that $\eta^\ast$ solves \eqref{eq:lemma_projection_problem}.
\end{proof}
\begin{remark}
    If $\pi_1^\ast$ in \Cref{lem:imputed_measure} is of Monge-type, i.e.\ if there is a measurable map $T_1:\R^k\to\R^k$ such that $\pi_1^\ast = \push{(\mathrm{id}_{\R^k},T_1)}{\mu_1}$, then \eqref{eq:lem_solution_to_projection_problem} simplifies to
    \begin{equation}\label{eq:solution_to_projection_problem_Monge}
        \eta^\ast(\d x_1,\d x_2)=\mu_1(\d x_1)\nu_{T_1(x_1)}(\d x_2).
    \end{equation}
\end{remark}

Lemma \ref{lem:imputed_measure} describes the form of $\eta_\ell^i$ in \eqref{eq:imputed_measure}. To apply it, let $\pi_\ell^i\in\Pi(\push{P_i}{\mu_i},\push{P_i}{\mu_\ell})$ be the optimal coupling between $\push{P_i}{\mu_i}$ and $\push{P_i}{\mu_\ell}$. Additionally, let $\mathcal{I}_i\subset[d]$ denote the indices of the projection $P_i$, i.e.\ $P_i(x_1,\ldots,x_d) = (x_j)_{j\in\mathcal{I}_i}$, and for the remaining indices define $\mathcal{I}_i^C:=[d]\setminus \mathcal{I}_i$. A solution to \eqref{eq:imputed_measure} then is
\begin{align}
    \eta_\ell^i(\d x_1,\ldots,\d x_d) := \int_{\R^{\lvert\mathcal{I}_i^C\rvert}}{\mu_\ell}_{(y_j)_{j\in\mathcal{I}_i}}((\d x_j)_{j\in\mathcal{I}_i^C})\, \pi_\ell^i((\d x_j,\d y_j)_{j\in\mathcal{I}_i}).
\end{align}
Having this, we can translate \eqref{eq:imputed_as_measure_abstract_metric_space} to the current setting by using $\eta_\ell^i$ with $\ell\in J_i$, and define
\begin{align}\label{eq:imputed_random_measure} 
\mathbb{P}_i:=\sum_{\ell\in J_i}p_\ell^i\boldsymbol{\delta}_{\eta_\ell^i}
\end{align}
to obtain a measure $\mathbb{P}_i\in\mathcal{P}_2(\mathcal{P}_2(\R^d))$, i.e.\ a random measure. 
For fully observed measures, i.e.\ $i\in I_f$ we set $\mathbb{P}_i:=\boldsymbol{\delta}_{\mu_i}$. For $i\in I_m$ and $\lvert J_i\rvert > 1$ we suggest the weights
\begin{align*} 
p_\ell^i\propto\exp \big(-\frac{\lambda}{2\sigma_i^2}W_2^2(\push{P_i}{\mu_i},\push{P_i}{\mu_\ell})\big),~~~\ell\in J_i.
\end{align*}
This reflects the idea that measures which are closer to the to be imputed measure in the known coordinates should receive more weight. Here, $\lambda > 0$ is a tuning parameter that controls this weighting and the variance $\sigma_i^2$ is used to standardize the distances, i.e.\
\begin{align*} 
\sigma_i^2:=\frac{1}{\lvert J_i\rvert -1}\sum_{\ell\in J_i}W_2^2(\push{P_i}{\mu_i},\push{P_i}{\mu_\ell}).
\end{align*}
If $\lvert J_i\rvert=1$ it is clear what to do, as there is only one measure available for imputation and for $\lvert J_i\rvert = 0$ we use the barycenter $\nu_{a_i}$ to impute $\push{P_i}{\mu_i}$. 

After imputation we obtain random measures $\mathbb{P}_1,\ldots,\mathbb{P}_N\in\mathcal{P}_2(\mathcal{P}_2(\R^d))$. To calculate their pairwise distances, we use the metric $\rho$ defined in \eqref{prod_dist_general}. That is, for $\mathbb{P},\mathbb{Q}\in\mathcal{P}_2(\mathcal{P}_2(\R^d))$, $\rho(\mathbb{P},\mathbb{Q})=0$, if $\mathbb{P}=\mathbb{Q}$, and for $\mathbb{P}\neq\mathbb{Q}$
\begin{align}\label{distance_on_randomized_measures} 
\rho(\mathbb{P},\mathbb{Q})=\int_{\mathcal{P}_2(\R^d)}\int_{\mathcal{P}_2(\R^d)}W_2(\mu,\nu)\,\mathbb{P}(\d\mu)\,\mathbb{Q}(\d\nu).
\end{align}

\section{Evaluation Method}\label{sec:GW_evaluation}

{Our main motivation for NA Wasserstein $k$-means was to develop a method for clustering and reconstructing measures that may have missing coordinates. To the best of our knowledge, this is the first method to address this problem, so we cannot compare it to existing methods. In this short section we propose an abstract methodology for evaluating the quality of a reconstruction method in the setting of metric (measure) spaces, based on the concept of Gromov-Wasserstein distance. As just explained this evaluation methodology will not be applied directly to our main problem of interest, since there is no alternative method to compare to. However, we will apply it to a related problem in \Cref{sec:GMM_Rec}, where the data consists of Euclidean points rather than measures. For the Euclidean case we can compare NA $k$-means, a special case of NA Wasserstein $k$-means, to existing alternative methods. }

Suppose there are elements $x_1,\ldots,x_N$ in a metric space $(\X,d_{\X})$, which we collect in the set $X:=\{x_1,\ldots,x_N\}$. The data we observe are these points but only via the maps $\varphi_i:\X\to\X_i$ defined in Section \ref{sec:clustering_proj_metric}, i.e.\ we observe $\tilde{x}_i=\varphi_i(x_i)$. The observed points are collected in $(\tilde{x}_1,\ldots,\tilde{x}_N)\in\mathcal{X}_{NA}$, where we define $\X_{NA}: = \X_1\times\cdots\times\X_N.$ This can be summarized by saying that we only observe the values of $(x_1,\dots, x_N)$ under the map $h:\mathcal{X}^N\to\mathcal{X}_{NA}$, in particular $h_i(x_1,\ldots,x_N) =\varphi_i(x_i) = \tilde{x}_i$. 

The goal is to reconstruct the metric structure of $X=\{x_1,\ldots,x_N\}$, which is described by their respective pairwise distances, as accurately as possible. This means that we want to find a \emph{good} metric space $(\Y,d_{\Y})$ and a reconstruction map $R:\mathcal{X}_{NA}\to \mathcal{Y}^N$. The reconstructed point corresponding to $x_i$ then is $R_i(\tilde{x}_1,\ldots,\tilde{x}_N) =y_i$.

Before establishing a way of comparing the reconstructed points $R((\tilde{x}_1,\ldots,\tilde{x}_N))=(y_1,\ldots,y_N)$ to the original points $X$, we want to consider another piece of information, namely that some observations of $X$ might be more important for us than others. This is motivated by our main intended application of reconstructing a banking landscape, in which a bank's importance may be linked to attributes such as its size, measured, for example, by its total assets. This follows the rationale that for a regulator it may be more important to reconstruct the characteristics of a very large bank than those of a smaller bank. A simple way to account for this is to assign weights proportional to the (or a function of) size of the bank. Formally, we define a probability measure $\mu_X$ on $X$ by setting $\mu_X(\{x_i\}):=p_i\geq 0$, where we assume $\sum_{i=1}^Np_i = 1$. 
Naturally we define $\mu_Y$, a probability measure on $\Y$, via $\mu_Y(\{y_i\}) = \mu_Y(R_i(\tilde{x}_1,\ldots,\tilde{x}_N)) := p_i$.

M{\'e}moli \cite{Me11}, and the related work of Sturm \cite{Stu20}, introduced the \emph{Gromov-Wasserstein distance} $GW_2$ in order to compare metric measure spaces. For two metric measure spaces $(\X,d_\X,\mu)$ and $(\Y,d_\Y,\nu)$, it is defined as 
\begin{align*} 
GW_2((\X,d_\X,\mu),(\Y,d_\Y,\nu))^2: =\\ \inf_{\pi\in\Pi(\mu,\nu)}\int\limits_{\X^2\times \Y^2}\lvert d_{\X}(x,x') - d_{\Y}(y,y')\rvert^2\,\pi(\d x,\d y)\,\pi(\d x',\d y').
\end{align*}

We will use the Gromov-Wasserstein distance to evaluate a reconstruction, i.e.\ we will use
\begin{align}\label{eq:GW_as_evaluation_criterion}
    GW_2\big((\X,d_\X,\mu_X),(\{R_1(\tilde{x}_1,\ldots,\tilde{x}_N),\ldots,R_N(\tilde{x}_1,\ldots,\tilde{x}_N)\},d_\Y,\nu_Y)\big)
\end{align}
as performance measure. Note, that in the case of reconstructing as described in \Cref{sec:imputing_general}, we have $\Y = \mathcal{P}(\X)$. For algorithms to compute the Gromov-Wasserstein distance we refer to \cite{Me11} and \cite{PeCuSo16}, which are implemented in Python optimal transport library \texttt{POT} (see \cite{FlCoGrAlBoChChCoFaFoGaGaJaRaReRoScSeSuTaToVa21}). We will use this implementation in \Cref{sec:GMM_Rec}. The curious reader is also referred to \cite{AlJa18,FiCoTaFl19,PeCuSo16} for some machine learning applications of Gromov-Wasserstein distances and related metrics.

\section{Experimental Results}\label{sec:exp_results}

\subsection{Reconstructing Points from a Gaussian Mixture Model}\label{sec:GMM_Rec}

We start by comparing our clustering / reconstruction method with existing imputation methods in the Euclidean case, i.e.\ $\X=\R^d$, and evaluate the results using the Gromov-Wasserstein distance, cf.\ \eqref{eq:GW_as_evaluation_criterion}, introduced in Section \ref{sec:GW_evaluation}.

We simulate data from a Gaussian Mixture Model 
\begin{equation}\label{eq:def_gamma_GMM}
    \gamma := \sum_{j=1}^k\alpha_j\mathcal{N}(\mu_j,\Sigma_j),
\end{equation}
with weights $\alpha_1,\ldots,\alpha_k\geq 0$ and $\sum_{j=1}^k \alpha_j=1$. Additionally, to simulate the importance of points, i.e.\ $\mu_X(\{x_i\})$ from Section \ref{sec:GW_evaluation}, we draw samples from a $\text{Lognormal}(\mu,\sigma^2)$ distribution and assign each observation a weight proportional to its sampled value. In our simulation study we fix the parameters $k=5$, $d=5$, and $N=500$, meaning that we always sample $500$ points. To simulate missing values we employ various missingness structures, corresponding to different choices of the map $h$ from Section \ref{sec:GW_evaluation}. Little and Rubin \cite{LiRu19} classify missing data mechanisms into the following three categories:
\begin{itemize}
    \item \emph{MCAR} (missing completely at random): The probability of a missing value does not depend on any observed or unobserved values.
    \item \emph{MAR} (missing at random): The probability of a missing value depends only on observed data, not on the missing data itself.
    \item \emph{MNAR} (missing not at random): The probability of a missing value depends on the missing values themselves, even when accounting for observed data.
\end{itemize}
In order to achieve robust results we incorporate different combinations of all these missingness mechanisms in our simulation study, i.e.\ in our definition of $h$ from \Cref{sec:GW_evaluation}.

Before describing the method used to create missing values, we first explain how the parameters for the data-generating process are chosen, i.e.\ $\alpha_1,\ldots,\alpha_k,\mu_1,\ldots,\mu_k,\Sigma_1,\ldots,\Sigma_k$ for the Gaussian Mixture Model, and $\mu,\sigma^2$ for the lognormal distribution governing the importance of sampled points. We set $\mu = 20,\sigma = 1.5$, as these parameters (empirically) model the total assets of a financial institution reasonably well. For the Gaussian Mixture Model, we aim to simulate the weights of the normal distributions, means and covariance matrices in a non-informative manner. Specifically, the weights $\alpha_j$ are sampled uniformly from the probability simplex, the means $\mu_j$'s are sampled uniformly from the cube $[-5,5]^{d}$ and the covariance matrices $\Sigma_j$'s are drawn from a Wishart distribution with parameters $(d,\mathrm{Id}_{d}/d)$. Note that with this choice of parameters for the Wishart distribution, we have $\E[\Sigma_j] = \mathrm{Id}_d$.

In order to create missing data, we aim to combine the types of missingness mechanisms described above. To achieve this, we choose weights $\beta^{MCAR},\beta^{MAR},\beta^{MNAR}\geq 0$ with $\beta^{MCAR} + \beta^{MAR} + \beta^{MNAR} = 1$. We then fix a proportion $p\in (0,1)$ which determines the total fraction of missing values. Given $N=500$ points sampled independently from $\gamma$, we compute for each dimension the $p\beta^{MNAR}$ quantile and proceed to create missing values as follows:
\begin{enumerate}[i)]
    \item \emph{MCAR}: set $100p\beta^{MCAR}\%$ values missing completely at random,
    \item \emph{MNAR}: for each dimension set all values to missing which are below the corresponding $p\beta^{MNAR}$ quantile,
    \item \emph{MAR}: for points where the first coordinate is observed and non-negative, the probability of missing values in the other coordinates is four times higher than for points where the first coordinate is observed and negative. In total, we create $100p\beta^{MAR}\%$ of missing values in the data following this rule.
\end{enumerate}
This procedure defines the map $h$ from \Cref{sec:GW_evaluation}. To generate the results in \Cref{table:results_GMM}, we set $p = 0.15$. For $\beta^{MCAR}, \beta^{MAR},$ and $\beta^{MNAR}$, we consider seven different combinations by applying each type of missingness individually, in pairs, and all three types together.

For each choice of $\beta^{MCAR}, \beta^{MAR},\beta^{MNAR}$, we sample the described parameters of the Gaussian Mixture Model $100$ times. We generate data, and compute pairwise distances between data points. Then, we create missing values according to $h$ and apply different imputation procedures presented below. If missing values are imputed by points, we compute pairwise distances between the imputed points. If the imputation is done via measures, as abstractly defined in \eqref{eq:imputed_as_measure_abstract_metric_space}, we use the distance $\rho$ from \Cref{sec:imputing_general} to compute pairwise distances. Finally, we calculate the corresponding Gromov-Wasserstein distance between original pairwise distances and the imputed pairwise distances, using the corresponding weights derived from the $\text{Lognormal}(\mu,\sigma^2)$ samples, i.e.\ we compute \eqref{eq:GW_as_evaluation_criterion}. In our experiment we look at the following imputation techniques:
\begin{itemize}
\item NA $k$-means: Our method in the case of Euclidean data, as outlined in Example \ref{ex:NA_KMeans}.
\item NA $k$-means-m: this corresponds to first applying NA $k$-means for clustering and imputation. After obtaining the measures $(\theta_i)_{i=1}^N$, we compute their expected values and use them as the imputed points.
\item Mean imputation: Each missing value is replaced by the mean of the corresponding attribute.
\item Median imputation: Each missing value is replaced by the median of the corresponding attribute.
\item Multiple imputation: A method \cite{Ru04,AzStFrLe11} that imputes missing values by generating multiple possible values and combines results.
\item KNN: K-nearest-neighbor imputation, where each missing value is replaced by a weighted average of points that are close in the coordinates which are not missing. We choose $K=4$.
\item LR: Missing values are imputed by regressing on observed values and using the predicted values as imputations.
\end{itemize}
For all methods not introduced in this article, i.e.\ all except NA $k$-means and NA $k$-means-m, we use the implementations from \texttt{scikit-learn}, \cite{scikit-learn}.

In Table \ref{table:results_GMM}, we report the estimated Gromov Wasserstein Distances along with their standard errors, based on sampling $100$ times for each combination of $\beta^{MCAR},\beta^{MAR}$ and $\beta^{MNAR}$.


\begin{table*}[t]
\caption{Gromov Wasserstein Distance for Euclidean Simulation with $p=0.15$.}
\resizebox{.975\textwidth}{!}{%
\begin{tabular}{lllccccccc}\toprule%
\bfseries $\beta^{MCAR}$ & \bfseries $\beta^{MAR}$ & \bfseries $\beta^{MNAR}$ & \bfseries NA $k$-means. & \bfseries NA $k$-means-m & \bfseries mean imp. & \bfseries median imp. & \bfseries KNN & \bfseries multiple imp. & \bfseries LR\\
\midrule
\vspace*{-11pt}
\\ 1 & 0 & 0 & 0.525~$\pm$~0.016 & 0.549~$\pm$~0.021 & 1.28~$\pm$~0.024 & 1.28~$\pm$~0.024 & 0.643~$\pm$~0.014 & 0.724~$\pm$~0.02 & 0.734~$\pm$~0.02
\\ 0 & 1 & 0 & 0.517~$\pm$~0.012 & 0.574~$\pm$~0.03 & 1.296~$\pm$~0.034 & 1.296~$\pm$~0.034 & 0.619~$\pm$~0.021 & 0.791~$\pm$~0.029 & 0.802~$\pm$~0.033
\\ 0 & 0 & 1 & 1.534~$\pm$~0.048 & 1.592~$\pm$~0.052 & 2.316~$\pm$~0.053 & 2.316~$\pm$~0.053 & 1.792~$\pm$~0.047 & 1.719~$\pm$~0.051 & 1.743~$\pm$~0.053
\\ 0.5 & 0.5 & 0 & 0.51~$\pm$~0.017 & 0.513~$\pm$~0.016 & 1.286~$\pm$~0.029 & 1.286~$\pm$~0.029 & 0.627~$\pm$~0.019 & 0.703~$\pm$~0.019 & 0.709~$\pm$~0.021
\\ 0 & 0.5 & 0.5 & 1.035~$\pm$~0.036 & 1.12~$\pm$~0.04 & 1.86~$\pm$~0.033 & 1.86~$\pm$~0.033 & 1.308~$\pm$~0.034 & 1.306~$\pm$~0.039 & 1.316~$\pm$~0.042
\\ 0.5 & 0 & 0.5 & 1.012~$\pm$~0.03 & 1.106~$\pm$~0.039 & 1.883~$\pm$~0.033 & 1.883~$\pm$~0.033 & 1.347~$\pm$~0.04 & 1.311~$\pm$~0.038 & 1.335~$\pm$~0.038
\\ 0.333 & 0.333 & 0.333 & 0.872~$\pm$~0.035 & 0.918~$\pm$~0.037 & 1.702~$\pm$~0.028 & 1.702~$\pm$~0.028 & 1.125~$\pm$~0.028 & 1.126~$\pm$~0.028 & 1.145~$\pm$~0.033

\\\bottomrule
\end{tabular}%
}\label{table:results_GMM}
\end{table*}

We observe that, in the case of Euclidean points with missing values, the proposed method consistently outperforms classical imputation techniques when considering the Gromov-Wasserstein distance as evaluation measure. Specifically, whether we apply \emph{NA $k$-means} directly or include an additional averaging step, i.e.\ \emph{NA $k$-means-m}, both algorithms outperform the remaining five methods across all considered missing data scenarios, i.e.\ combinations of MCAR, MAR and MNAR. In the Supplementary Material we use the (adjusted) Rand index \cite{Ra71,HuAr86} as evaluation method when treating NA $k$-means purely as a clustering algorithm, rather than as a method to reconstruct the metric structure of objects, i.e.\ as imputation method. There we also compare it to $k$-pod which is consistently outperformed by NA $k$-means (see Table \ref{table:RS_GMM}). Moreover, we provide results for additional values of $p$, i.e.\ the total share of missing values in the observed data, to accompany and robustify our results. 

\subsection{Reconstructing Financial Institutions}\label{sec:recon_financial}

As described in the introduction, the primary motivation for developing NA Wasserstein $k$-means comes from clustering financial institutions based on granular loan data they are obliged to report to the central bank. This problem is of particular interest to regulators, as it enables a data-driven assessment of similarities and differences between financial institutions. If prior beliefs about similarities exist, this method provides a way to confirm or challenge them. In particular, if financial institutions have already been grouped based on prior knowledge, our approach can serve as a validation tool, highlighting institutions whose cluster assignment deviates from the expected grouping and may therefore warrant further investigation.

Before presenting an experiment of applying NA Wasserstein $k$-means, note that there is another experiment justifying the usage of NA Wasserstein $k$-means in \Cref{sec:Justifying_NA_W_KMeans}.

To demonstrate NA Wasserstein $k$-means in a practical example, we use real data from Oesterreichische Nationalbank, the central bank of Austria, and analyze loan data reported by $321$ financial institutions in Austria. In total, our dataset consists of $129230$ loans. Each loan is described by up to four attributes: interest rate, interest rate margin, probability of default and par value. In our analysis we apply a logarithmic transformation to par value and then standardize all four attributes across all loans. Additionally, we account for loan size by weighting each loan within a financial institution proportionally to the logarithm of its size. Among the $321$ institutions, $265$ reported all four attributes, $45$ institutions reported three and $11$ institutions reported only two. As a result, our observed data consists of $265$ probability measures in $\mathcal{P}_2(\R^4)$, $45$ probability measures in $\mathcal{P}_2(\R^3)$ and $11$ probability measures in $\mathcal{P}_2(\R^2)$. Ideally, the complete dataset would consist of $321$ probability measures in $\mathcal{P}_2(\R^4)$ but due to differences in reporting, missing values must be accounted for in practice. While meaningful imputation is already challenging for Euclidean data, it is even more difficult when dealing with measure-valued data.

Since prior imputation is not necessary for NA Wasserstein $k$-means, we can apply it directly as described in \Cref{sec:NA_WKMeans} and cluster the financial institutions into seven groups. After clustering, we use the imputation method described in the same section to obtain random measures $\mathbb{P}_1,\ldots,\mathbb{P}_{321}$ on $\R^4$, cf.\ \Cref{eq:imputed_measure}. Using the generalized distance $\rho$ on $\mathcal{P}_2(\mathcal{P}_2(\R^4))$, defined in \eqref{distance_on_randomized_measures}, we compute pairwise distances between the (imputed) financial institutions. Once the pairwise distances are computed, it is possible to visualize the reconstructed financial institutions and clustering results. To create such a banking landscape, dimensionality reduction techniques may be applied to approximate pairwise distances in a lower-dimensional space. Here, we choose \emph{Isomap} (cf.\ \cite{TeSiLa00}) over other methods, such as multidimensional scaling, as it is better suited to recover nonlinear manifolds. Applying Isomap to the pairwise distance matrix $(\rho(\mathbb{P}_i,\mathbb{P}_j))_{i,j=1}^{321}$ maps the financial institutions to $\R^3$ while preserving the overall distance structure as closely as possible.


Figure \ref{fig:landscape_labels} presents the result of this three-dimensional representation, i.e.\ a banking landscape with the corresponding clusters. Each point in the figure represents one of the $321$ financial institutions, and each color describes a cluster. A regulatory analyst can now use this three-dimensional representation to reassess prior beliefs about the banking landscape or existing groupings of financial institutions and to identify institutions of particular interest. Additionally to prior beliefs, the visualization may help analysts to further their understanding which financial institutions are similar to each other and also identify financial institutions which behave significantly different to others with respect to their loan structure. Detecting potential outliers is of particular interest.

We also note that, having imputed the measures, i.e.\ ``reconstructed'' financial institutions, in particular having computed their pairwise distances $(\rho(\mathbb{P}_i,\mathbb{P}_j))_{i,j=1}^{321}$, any distance-based learning algorithm can be applied. Specifically, distance-based outlier algorithms can be used to identify institutions that deviate significantly from the rest. 

To conclude this experiment, we emphasize that clustering probability measures directly using NA Wasserstein $k$-means can yield significantly different results compared to clustering their aggregated data, such as their expectations, with methods like NA $k$-means. To illustrate this, we computed the expected values of the $321$ probability measures considered in this section and applied NA $k$-means to the resulting points (with possibly missing values) in $\mathbb{R}^4$. Using the obtained cluster labels, we then computed generalized Wasserstein barycenters for these clusters. Notably, when these assignments and barycenters were used in the loss function \Cref{eq:Loss_NA_Wasserstein_kmeans}, the resulting value was $22.26\%$ higher than when clustering the probability measures directly using \Cref{alg:NA_WKMeans}. Furthermore, comparing the labels of the two clusterings using the adjusted Rand index\footnotemark[1] yielded a value of $0.2058$, indicating very poor agreement, as a value of $0$ would correspond to the expected agreement of a random assignment. This highlights the advantage of clustering in the space of probability measures rather than relying on simple data aggregation.

\subsection{Justifying NA Wasserstein \texorpdfstring{$k$}{k}-means}\label{sec:Justifying_NA_W_KMeans}

To justify the use of NA Wasserstein $k$-means, we carry out another experiment. It is similar to the one in Section \ref{sec:GMM_Rec}, where we artificially create missing values, but this time using \emph{distributional data}. To the best of our knowledge, no existing method can cluster probability measures that are only observed as push-forwards under projections. Therefore, we perform an experiment on the real loan data from the previous section, simulating missing values completely at random. 

Specifically, we consider $100$ financial institutions from the previous section that reported all four attributes of their credits, meaning there were no missing values. For each of these institutions we sample $100$ credits to reduce the support size and, consequently, the computational cost, as NA Wasserstein $k$-means will be applied multiple times to obtain reliable standard errors. The resulting $100$ probability measures on $\R^4$ contain no missing values, allowing us to cluster them into five groups using Wasserstein $k$-means. The assignments from this clustering serve as the ``ground truth'' for this experiment. 

Next, we create missing values completely at random: we choose a percentage $p\in(0,1)$ and a number of affected dimensions $d_m\in\{1,2\}$. Then, we randomly select $100p$ institutions and for each selected institution, we randomly choose $d_m$ of the $4$ dimensions and set them to NA. After simulating missing values in this way, we apply NA Wasserstein $k$-means and compare the resulting cluster assignments to the ground truth using the adjusted Rand index.\footnotemark[1]\footnotetext[1]{The Rand index \cite{Ra71} is a similarity measure that evaluates the agreement between two clusterings by considering agreement or non-agreement of all pairwise assignments. It takes the value $1$ for perfect agreement and $0$ for complete disagreement. The adjusted Rand index \cite{HuAr86} corrects the Rand index for chance by adjusting for the expected similarity under a random model. It ranges from $-0.5$ to $1$, where $1$ indicates perfect agreement, $0$ corresponds to random labeling, and negative values suggest less agreement than expected by chance.}

For each combination of $p$ and $d_m$, the simulation of missing values with subsequent clustering is repeated $100$ times. The means and standard errors of the resulting adjusted Rand indices are presented in \Cref{table:results_meas_data_adj}. They indicate that in most settings, the ``ground truth'' labels are well recovered. As expected, the adjusted Rand index decreases as $p$ and $d_m$ increase, i.e.\ as the proportion of missing data grows. Notably, when only one coordinate is missing for up to a quarter of the data, the adjusted Rand index remains above $0.87$. Even when two coordinates are missing, it only drops slightly below $0.8$ if more than $20\%$ of the measures are affected. We conclude that this experiment demonstrates the effectiveness of NA Wasserstein $k$-means as a clustering algorithm for probability measures that are only partially observed as push-forwards under projections.


\begin{table*}[ht]
\centering
\caption{Adjusted Rand Index for Granular Data}
\resizebox{.75\textwidth}{!}{%
\begin{tabular}{lccccc}\toprule%
 & \bfseries $p=0.05$ & \bfseries $p=0.1$ & \bfseries $p=0.15$ & \bfseries $p=0.2$ & \bfseries $p=0.25$ \\
\midrule
\vspace*{-11pt}
\\ $d_m = 1$ & 0.942~$\pm$~0.006 & 0.919~$\pm$~0.006 & 0.891~$\pm$~0.006 & 0.87~$\pm$~0.005 & 0.871~$\pm$~0.006
\\ $d_m = 2$ & 0.911~$\pm$~0.006 & 0.881~$\pm$~0.006 & 0.838~$\pm$~0.007 & 0.821~$\pm$~0.007 & 0.786~$\pm$~0.007

\\\bottomrule
\end{tabular}%
}\label{table:results_meas_data_adj}
\end{table*}

\bibliographystyle{abbrv}
\bibliography{literature}

\appendix

\section{Existence of ``barycenters''}\label{sec:ex_barys}

In this section we discuss the existence of minimizers in the general barycenter updating step \eqref{eq:general_bary_update} of Section \ref{sec:problem_algo_general_metric}. Thus, let us consider a metric space $(\X,d)$ and finitely many continuous maps into other metric spaces $\varphi_i:\X\to\X_i$. We observe the image points $\tilde{x}_i=\varphi_i(x_i)$ for $i\in[n]$. The question is: under which conditions is
\begin{align} \label{eq:general_frechet_mean_problem}
\inf\limits_{y\in\X}\sum_{i=1}^n \lambda_id(\varphi_i^{-1}(\tilde{x}_i),y)^2
\end{align}
attained? Here, $\lambda_i>0,\sum_{i=1}^n\lambda_i=1$ denote some convex weights. Since in this general formulation the maps $\varphi_i$ can be arbitrary, minimizers do not have to exist. However, if one of the maps $\varphi_i$ is the identity on $\X$, we can guarantee existence under mild assumptions.

\begin{lemma}\label{lemma:existence_barys}
Assume there exists a metrizable topology $\tau$ on $\X$ which is weaker than the topology induced by the metric $d$, and such that 
\begin{enumerate}[i)]
\item $\varphi_i^{-1}(\tilde{x}_i)$ is closed in $\tau$ for all $i\in[n]$,\label{asmp_one_lemma}
\item balls, i.e.\ $B_d(y,c):=\{x\in\X:d(x,y)\leq c\}$, are $\tau$ compact for all $y\in\X,c>0$,\label{asmp_two_leamma}
\item $d(\cdot,\cdot)$ is lower semi continuous w.r.t.\ the product topology $\tau\times\tau$,\label{asmp_three_lemma_lsc_of_metric}
\item there exists $j\in[n]$ such that $\varphi_j=Id_\X$.\label{asmp_four_lemma}
\end{enumerate}
Then \eqref{eq:general_frechet_mean_problem} admits a minimizer.
\end{lemma}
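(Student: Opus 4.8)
The goal is to show that the functional $F(y) := \sum_{i=1}^n \lambda_i d(\varphi_i^{-1}(\tilde{x}_i), y)^2$ attains its infimum on $\X$, and the strategy is the direct method of the calculus of variations applied with respect to the auxiliary topology $\tau$. First I would take a minimizing sequence $(y_m)_{m \in \N} \subseteq \X$ with $F(y_m) \to \inf_{y \in \X} F(y) =: I < \infty$. The first step is to confine this sequence to a $\tau$-compact set: using assumption \ref{asmp_four_lemma}, pick $j$ with $\varphi_j = Id_\X$, so that $d(\varphi_j^{-1}(\tilde{x}_j), y_m) = d(x_j, y_m)$ where $x_j = \tilde{x}_j$. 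Since $\lambda_j d(x_j, y_m)^2 \le F(y_m)$ and $F(y_m)$ is bounded (say by $I + 1$ for $m$ large), we get $d(x_j, y_m) \le \sqrt{(I+1)/\lambda_j} =: \Rho$, hence $y_m \in B_d(x_j, \Rho)$ for all large $m$. By assumption \ref{asmp_two_leamma} this ball is $\tau$-compact, and since $\tau$ is metrizable, we can extract a subsequence (not relabeled) with $y_m \to y_\infty$ in $\tau$ for some $y_\infty \in B_d(x_j, \Rho) \subseteq \X$.

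The second step is lower semicontinuity of each summand. Fix $i$. For each $m$, choose (or take an approximate minimizer, then pass to a limit) a point $z^i_m \in \varphi_i^{-1}(\tilde{x}_i)$ with $d(z^i_m, y_m)^2 \le d(\varphi_i^{-1}(\tilde{x}_i), y_m)^2 + 1/m$; note the set $\varphi_i^{-1}(\tilde{x}_i)$ is nonempty along the minimizing sequence since $F(y_m) < \infty$. The sequence $(z^i_m)$ satisfies $d(x_j, z^i_m) \le d(x_j, y_m) + d(y_m, z^i_m) \le \Rho + \sqrt{(I+1)/\lambda_i} + 1$ for large $m$, so it too lies in a fixed $d$-ball, which is $\tau$-compact; extract a further subsequence with $z^i_m \to z^i_\infty$ in $\tau$. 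By assumption \ref{asmp_one_lemma}, $\varphi_i^{-1}(\tilde{x}_i)$ is $\tau$-closed, so $z^i_\infty \in \varphi_i^{-1}(\tilde{x}_i)$. Now apply assumption \ref{asmp_three_lemma_lsc_of_metric}: since $(z^i_m, y_m) \to (z^i_\infty, y_\infty)$ in $\tau \times \tau$ and $d$ is $\tau \times \tau$-lower semicontinuous, and $t \mapsto t^2$ is continuous and increasing on $[0,\infty)$,
\begin{align*}
d(\varphi_i^{-1}(\tilde{x}_i), y_\infty)^2 \le d(z^i_\infty, y_\infty)^2 \le \liminf_{m \to \infty} d(z^i_m, y_m)^2 \le \liminf_{m \to \infty} d(\varphi_i^{-1}(\tilde{x}_i), y_m)^2.
\end{align*}
One subtlety: to run these extractions simultaneously for all $i \in \{1,\ldots,n\}$ and for $y_\infty$, one passes to a common subsequence by a finite diagonal argument, which is harmless since $n$ is finite.

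The third step simply combines the pieces: summing the displayed inequality over $i$ with weights $\lambda_i$ and using superadditivity of $\liminf$,
\begin{align*}
F(y_\infty) = \sum_{i=1}^n \lambda_i d(\varphi_i^{-1}(\tilde{x}_i), y_\infty)^2 \le \sum_{i=1}^n \lambda_i \liminf_{m\to\infty} d(\varphi_i^{-1}(\tilde{x}_i), y_m)^2 \le \liminf_{m\to\infty} F(y_m) = I,
\end{align*}
so $y_\infty$ is a minimizer. I expect the main obstacle to be purely bookkeeping rather than conceptual: namely being careful that $\varphi_i^{-1}(\tilde{x}_i)$ need not itself be $\tau$-compact (only $\tau$-closed), so one cannot directly say the distance $d(\varphi_i^{-1}(\tilde{x}_i), y_m)$ is a minimum — hence the $1/m$ slack in choosing $z^i_m$ — and that one must intersect with a $d$-ball to regain compactness before extracting. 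The role of assumption \ref{asmp_four_lemma} is precisely to supply the coercivity ($F(y) \to \infty$ as $y$ leaves $d$-balls around $x_j$) that makes all these boundedness arguments go through; without it, minimizing sequences could escape to "$\tau$-infinity" and no minimizer need exist, as the remark preceding the lemma already warns.
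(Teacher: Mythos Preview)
Your proof is correct and follows essentially the same route as the paper: the direct method with a minimizing sequence, confinement to a $\tau$-compact $d$-ball via assumption \ref{asmp_four_lemma}, extraction of $\tau$-convergent subsequences for both the $y_m$ and the near-minimizers $z^i_m$, and lower semicontinuity of $d$ in $\tau\times\tau$ to pass to the limit. The only organizational difference is that the paper first establishes as a separate claim that each set-distance $d(\varphi_i^{-1}(\tilde{x}_i),y)$ is actually \emph{attained} (this fact is reused later for the imputation step \eqref{eq:fill_up_point}) and then derives an ``almost lower semicontinuity'' of $y\mapsto d(A,y)$ along sequences that are additionally $d$-bounded, whereas you bypass both auxiliary statements by working directly with $1/m$-approximate minimizers and joint $\tau\times\tau$ convergence---slightly leaner for this lemma alone, though it does not yield the attainment by-product.
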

\begin{proof}
Let us first prove that for $d(\varphi_i^{-1}(\tilde{x}_i),y)$ there exists $z\in\varphi_i^{-1}(\tilde{x}_i)$ such that
\begin{align} \label{eq:last_point}
d(\varphi_i^{-1}(\tilde{x}_i),y) = \inf\limits_{x\in\varphi_i^{-1}(\tilde{x}_i)}d(x,y) = d(z,y).
\end{align}
To this end, take a sequence $\{x_n\}_{n\geq 1}\subset \varphi_i^{-1}(\tilde{x}_i)$ such that
\begin{align*} 
d(x_n,y)\searrow \inf\limits_{x\in\varphi_i^{-1}(\tilde{x}_i)}d(x,y).
\end{align*}
We then set $c:=d(x_1,y)$, so $\{x_n\}_{n\geq 1}\subset B_d(y,c)$. Due to $\tau$-compactness of $B_d(y,c)$ we obtain a subsequence $(x_{n_k})_{k\geq 1}$ such that $x_{n_k}\overset{\tau}{\to}z\in B_d(y,c)$. Since $\{x_{n_k}\}_{k\geq 1}\subset \varphi_i^{-1}(\tilde{x}_i)$ and $\varphi_i^{-1}(\tilde{x}_i)$ is closed in $\tau$, we also have $z\in\varphi_i^{-1}(\tilde{x}_i)$. By lower semicontinuity of $d$ w.r.t.\ $\tau\times\tau$ we obtain
\begin{align*} 
d(z,y)\leq\liminf_{k\to\infty}d(x_{n_k},y) = \inf\limits_{x\in\varphi_i^{-1}(\tilde{x}_i)}d(x,y),
\end{align*}
which proves the existence of $z\in\varphi_i^{-1}(\tilde{x}_i)$ such that $d(z,y)=d(\varphi_i^{-1}(\tilde{x}_i),y)$.

Let us next prove that for a set $A=\varphi_i^{-1}(\tilde{x}_i)$ the map $x\mapsto d(A,x)$ is almost lower semi continuous w.r.t.\ $\tau$. Precisely, let us show that for a sequence $\{x_n\}_{\geq 1}\subset\X$ such that $x_n\overset{\tau}{\to} x\in\X$ and $d(x_n,x)\leq c$ we have
\begin{align}\label{eq:almost_lsc} 
\liminf_{n\to\infty}d(A,x_n)\geq d(A,x).
\end{align}
Without loss of generality, assume that the left-hand side of \eqref{eq:almost_lsc} is finite, as otherwise there is nothing to prove. Therefore, we may assume the existence of a constant $c'>0$ such that $d(A,x_n)\leq c'$.
By \eqref{eq:last_point}, for $n\geq 1$ there exists $z_n\in A$ such that $d(A,x_n) = d(z_n,x_n)$. We then have
\begin{align*} 
d(z_n,x)\leq d(z_n,x_n)+d(x_n,x) = d(A,x_n) + d(x_n,x)\leq c' + c.
\end{align*}
By $\tau$ compactness of $B_d(x,c+c')$ and since $A$ is closed in $\tau$, there exists $z\in A\cap B_d(x,c+c')$ such that $z_n\overset{\tau}{\to} z$ up to switching to a subsequence. By lower semi continuity of $d$ w.r.t. $\tau\times\tau$, i.e.\ assumption \ref{asmp_three_lemma_lsc_of_metric}, we have
\begin{align*} 
\liminf_{n\to\infty} d(A,x_n) = \liminf_{n\to\infty} d(z_n,x_n) \geq d(z,x) \geq d(A,x),
\end{align*}
which proves \eqref{eq:almost_lsc}.

To prove the existence of minimizers for \eqref{eq:general_frechet_mean_problem} we set
\begin{align*} 
V:=\inf\limits_{y\in\X}\sum_{i=1}^n\lambda_id(\varphi_i^{-1}(\tilde{x}_i),y)^2
\end{align*}
and take a sequence $\{y_n\}_{n\geq 1}\subset\X$ such that $\sum_{i=1}^n \lambda_id(\varphi_i^{-1}(\tilde{x}_i),y_n)^2\searrow V$. We assume without loss of generality that $d(\tilde{x}_j,y_n)\leq 2V$, since $\varphi_j=Id_\X$, so we have $\{y_n\}_{n\geq 1}\subset B_d(\tilde{x}_j,2V)$. By $\tau$-compactness of $B_d(\tilde{x}_j,2V)$ we have the existence of a subsequence $\{y_{n_k}\}_{k\geq 1}\subset B_d(\tilde{x}_j,2V)$ such that $y_{n_k}\overset{\tau}{\to} y^\ast$ for some $y^\ast\in B_d(\tilde{x}_j,2V)$. 
We can now use the proved property \eqref{eq:almost_lsc} to obtain
\begin{align*} 
V &= \liminf_{k\to\infty} \sum_{i=1}^n \lambda_id(\varphi_i^{-1}(\tilde{x}_i),y_{n_k})^2\\
& \geq \sum_{i=1}^n \lambda_id(\varphi_i^{-1}(\tilde{x}_i),y^\ast)^2.
\end{align*}
Therefore, we have found $y^\ast\in\X$ such that
\begin{align*} 
\sum_{i=1}^n \lambda_id(\varphi_i^{-1}(\tilde{x}_i),y^\ast)^2 = \inf\limits_{y\in\X}\sum_{i=1}^n\lambda_id(\varphi_i^{-1}(\tilde{x}_i),y)^2,
\end{align*}
finishing the proof.
\end{proof}

In the proof we also showed the existence of minimizers in \eqref{eq:fill_up_point}, i.e.\ points used for ``filling up'', under the same assumptions. This is precisely \eqref{eq:last_point}.

Let us discuss the assumptions of Lemma \ref{lemma:existence_barys}. Note that assumptions \ref{asmp_one_lemma}, \ref{asmp_two_leamma} and \ref{asmp_three_lemma_lsc_of_metric} are fulfilled in our two important examples of $\X = \R^d$ and $\X = \mathcal{P}_2(\R^d)$. Indeed, for $\X=\R^d$ this is clear, whereas for $\X=\mathcal{P}_2(\R^d)$ the role of the metric $\tau$ is played by the metric induced by weak convergence of probability measures. For details we refer to \cite[Chapter $7$]{Villani}. As for
Assumption \ref{asmp_four_lemma}, this basically (in practical terms) means that for each cluster we need at least one fully observed data point. In our two examples, NA $k$-means and NA Wasserstein $k$-means, this would not even be necessary as in \cite[Proposition 3.1]{JuGoSa21} it is shown that existence of the generalized Wasserstein barycenter is always guaranteed as long as the maps $\varphi_i$ are linear. Still, it might be numerically advantageous to have at least one fully observed point in each cluster or include the previous barycenter with a small weight, cf.\ \eqref{eq:bary_update_with_weight}.

\newpage

\section*{Supplementary Material: Further Simulation Experiments}\label{sec:supplementary}
Let us here discuss an extended version of our experiment in \Cref{sec:GMM_Rec} to evaluate our algorithm. In Section \ref{sec:GMM_Rec} we evaluated our method in the Euclidean setting, NA $k$-means, as an imputation method. However, without the imputation step it can also be viewed as a clustering algorithm. In the case of Euclidean data it can be viewed as a clustering algorithm for points with missing values. Thus, we can also compare it to classical $k$-means after imputing the missing data with more standard methods for imputing missing values as used in \ref{sec:GMM_Rec}. To recall, the standard methods for imputing missing values we consider are mean imputation, median imputation, multiple imputation, K-nearest neighbor imputation, and imputation through linear regression. However, in the case of clustering Euclidean points with missing data we also consider the $k$-pod method which was introduced in \cite{ChChBa16}. These authors consider the same loss function as we do in Example \ref{ex:NA_KMeans}, i.e.\ \eqref{eq:NA-kmeans}, however they propose a different algorithm. To perform the comparison, we use the same simulation from Section \ref{sec:GMM_Rec}, i.e.\ sampling from a Gaussian Mixture Model with $k=5$ clusters in $5$ dimensions and different settings of missing values. We use precisely the same simulated data, and for each of those we apply classical $k$-means to cluster the points after imputing them with a standard imputation method. For our proposed method, i.e.\ NA $k$-means, and for the $k$-pod method, no imputation step is needed as these directly cluster points with missing values. 
To evaluate the clustering results we use the so-called Rand index, also known as the Rand score, introduced in \cite{Ra71}, as well as the adjusted Rand index (c.f.\ adjusted Rand score), introduced in \cite{HuAr86}. They both compare a baseline clustering (in our case, the labels from the normal distribution of the Gaussian Mixture Model to which an observation belongs) to a clustering of the data using an imputation and/or clustering algorithm. The Rand score takes values in $[0,1]$, with $0$ meaning nothing is clustered the same, and $1$ meaning the clusterings coincide except for renaming of clusters. In particular, a higher Rand score corresponds to a better clustering method. The adjusted Rand score is similar to the Rand score but adjusted for chance. It takes value between $-0.5$ and $1$ with $0$ indicating a random clustering. 

In Table \ref{table:RS_GMM} we report the corresponding mean Rand scores $\pm$ standard errors for the simulations of \Cref{sec:GMM_Rec}. In Table \ref{table:adj_RS_GMM} there are the corresponding adjusted Rand scores.


\begin{table}[ht]
\caption{Rand Scores for Euclidean Simulation With $p=0.15$.}
\resizebox{.975\textwidth}{!}{%
\begin{tabular}{lllccccccc}\toprule%
\bfseries $\beta^{MCAR}$ & \bfseries $\beta^{MAR}$ & \bfseries $\beta^{MNAR}$ & \bfseries NA $k$-means & \bfseries mean imp. & \bfseries median imp. & \bfseries KNN & \bfseries multiple imp. & \bfseries LR & \bfseries $k$-pod \\
\midrule
\vspace*{-11pt}
\\ 1 & 0 & 0 & 0.901~$\pm$~0.008 & 0.876~$\pm$~0.008 & 0.864~$\pm$~0.008 & 0.896~$\pm$~0.008 & 0.897~$\pm$~0.008 & 0.892~$\pm$~0.008 & 0.855~$\pm$~0.008
\\ 0 & 1 & 0 & 0.907~$\pm$~0.008 & 0.874~$\pm$~0.007 & 0.879~$\pm$~0.007 & 0.905~$\pm$~0.008 & 0.894~$\pm$~0.008 & 0.899~$\pm$~0.008 & 0.865~$\pm$~0.007
\\ 0 & 0 & 1 & 0.861~$\pm$~0.009 & 0.831~$\pm$~0.007 & 0.836~$\pm$~0.008 & 0.857~$\pm$~0.008 & 0.859~$\pm$~0.008 & 0.859~$\pm$~0.008 & 0.838~$\pm$~0.008
\\ 0.5 & 0.5 & 0 & 0.897~$\pm$~0.008 & 0.879~$\pm$~0.007 & 0.863~$\pm$~0.007 & 0.902~$\pm$~0.008 & 0.907~$\pm$~0.008 & 0.893~$\pm$~0.008 & 0.866~$\pm$~0.007
\\ 0 & 0.5 & 0.5 & 0.884~$\pm$~0.009 & 0.857~$\pm$~0.007 & 0.855~$\pm$~0.007 & 0.879~$\pm$~0.008 & 0.883~$\pm$~0.008 & 0.881~$\pm$~0.008 & 0.856~$\pm$~0.008
\\ 0.5 & 0 & 0.5 & 0.882~$\pm$~0.008 & 0.857~$\pm$~0.007 & 0.853~$\pm$~0.007 & 0.877~$\pm$~0.008 & 0.882~$\pm$~0.008 & 0.888~$\pm$~0.008 & 0.85~$\pm$~0.007
\\ 0.333 & 0.333 & 0.333 & 0.884~$\pm$~0.009 & 0.865~$\pm$~0.007 & 0.864~$\pm$~0.007 & 0.884~$\pm$~0.008 & 0.885~$\pm$~0.008 & 0.879~$\pm$~0.009 & 0.851~$\pm$~0.007

\\\bottomrule
\end{tabular}%
}\label{table:RS_GMM}
\end{table}


\begin{table}[ht]
\caption{Adjusted Rand Scores for Euclidean Simulation With $p=0.15$.}
\resizebox{.975\textwidth}{!}{%
\begin{tabular}{lllccccccc}\toprule%
\bfseries $\beta^{MCAR}$ & \bfseries $\beta^{MAR}$ & \bfseries $\beta^{MNAR}$ & \bfseries NA $k$-means & \bfseries mean imp. & \bfseries median imp. & \bfseries KNN & \bfseries multiple imp. & \bfseries LR & \bfseries $k$-pod \\
\midrule
\vspace*{-11pt}
\\ 1 & 0 & 0 & 0.768~$\pm$~0.018 & 0.706~$\pm$~0.016 & 0.68~$\pm$~0.016 & 0.756~$\pm$~0.016 & 0.756~$\pm$~0.017 & 0.746~$\pm$~0.017 & 0.661~$\pm$~0.016
\\ 0 & 1 & 0 & 0.778~$\pm$~0.017 & 0.699~$\pm$~0.015 & 0.709~$\pm$~0.015 & 0.778~$\pm$~0.017 & 0.752~$\pm$~0.017 & 0.763~$\pm$~0.017 & 0.68~$\pm$~0.016
\\ 0 & 0 & 1 & 0.669~$\pm$~0.018 & 0.593~$\pm$~0.015 & 0.608~$\pm$~0.017 & 0.661~$\pm$~0.018 & 0.665~$\pm$~0.017 & 0.664~$\pm$~0.017 & 0.617~$\pm$~0.018
\\ 0.5 & 0.5 & 0 & 0.758~$\pm$~0.018 & 0.71~$\pm$~0.015 & 0.676~$\pm$~0.015 & 0.768~$\pm$~0.017 & 0.78~$\pm$~0.016 & 0.75~$\pm$~0.017 & 0.68~$\pm$~0.016
\\ 0 & 0.5 & 0.5 & 0.727~$\pm$~0.019 & 0.656~$\pm$~0.016 & 0.654~$\pm$~0.016 & 0.714~$\pm$~0.016 & 0.724~$\pm$~0.017 & 0.717~$\pm$~0.017 & 0.658~$\pm$~0.017
\\ 0.5 & 0 & 0.5 & 0.719~$\pm$~0.018 & 0.656~$\pm$~0.015 & 0.648~$\pm$~0.015 & 0.704~$\pm$~0.017 & 0.718~$\pm$~0.016 & 0.735~$\pm$~0.017 & 0.644~$\pm$~0.015
\\ 0.333 & 0.333 & 0.333 & 0.727~$\pm$~0.019 & 0.676~$\pm$~0.015 & 0.676~$\pm$~0.015 & 0.726~$\pm$~0.017 & 0.728~$\pm$~0.017 & 0.716~$\pm$~0.018 & 0.644~$\pm$~0.016

\\\bottomrule
\end{tabular}%
}\label{table:adj_RS_GMM}
\end{table}

We can see that the introduced method NA $k$-means either performs best or lies within the standard error of the best performing method for each setting of missing values when considering it solely as clustering algorithm for points with missing data. The naive approaches of mean and median imputation are outperformed. Also, the $k$-pod method, which uses the same loss function \eqref{eq:NA-kmeans} but a different algorithm than proposed here, is outperformed in each setting.

\subsection*{Generalizing our Results}
In our simulation study, the amount of missing data is controlled by the parameter $p$, which represents the share of missing values. In Section \ref{sec:GMM_Rec} and the previous paragraph, this parameter was fixed at $p=0.15$. However, to assess the robustness of our method, we now vary $p$ across different values. In Table \ref{table:GW_Euclidean_varying_NAs}, Table \ref{table:RS_varying_NAs} and Table \ref{table:adj_RS_varying_NAs} we present the corresponding results analogous to Table \ref{table:results_GMM}, Table \ref{table:RS_GMM} and Table \ref{table:adj_RS_GMM}, respectively, when letting $p\in\{0.1,0.2,0.25,0.3\}$.

\begin{table}[ht]
\caption{Gromov Wasserstein Distance for Euclidean Simulation With Varying Share of Missing Values.}
\resizebox{.975\textwidth}{!}{%
\begin{tabular}{lllccccccc}\toprule%
\bfseries $\beta^{MCAR}$ & \bfseries $\beta^{MAR}$ & \bfseries $\beta^{MNAR}$ & \bfseries NA $k$-means. & \bfseries NA $k$-means-m & \bfseries mean imp. & \bfseries median imp. & \bfseries KNN & \bfseries multiple imp. & \bfseries LR\\
\midrule
\multicolumn{10}{c}{\bfseries $p=0.1$} \\ \cline{1-10}
\\ 1 & 0 & 0 & 0.447~$\pm$~0.021 & 0.452~$\pm$~0.02 & 1.021~$\pm$~0.026 & 1.021~$\pm$~0.026 & 0.479~$\pm$~0.019 & 0.625~$\pm$~0.028 & 0.618~$\pm$~0.024
\\ 0 & 1 & 0 & 0.426~$\pm$~0.018 & 0.427~$\pm$~0.015 & 1.012~$\pm$~0.028 & 1.012~$\pm$~0.028 & 0.445~$\pm$~0.019 & 0.617~$\pm$~0.025 & 0.609~$\pm$~0.023
\\ 0 & 0 & 1 & 1.171~$\pm$~0.042 & 1.189~$\pm$~0.041 & 1.833~$\pm$~0.038 & 1.833~$\pm$~0.038 & 1.335~$\pm$~0.038 & 1.418~$\pm$~0.042 & 1.391~$\pm$~0.041
\\ 0.5 & 0.5 & 0 & 0.406~$\pm$~0.01 & 0.453~$\pm$~0.033 & 1.003~$\pm$~0.024 & 1.003~$\pm$~0.024 & 0.446~$\pm$~0.016 & 0.573~$\pm$~0.017 & 0.584~$\pm$~0.02
\\ 0 & 0.5 & 0.5 & 0.874~$\pm$~0.042 & 0.923~$\pm$~0.039 & 1.462~$\pm$~0.028 & 1.462~$\pm$~0.028 & 1.013~$\pm$~0.033 & 1.057~$\pm$~0.029 & 1.045~$\pm$~0.032
\\ 0.5 & 0 & 0.5 & 0.856~$\pm$~0.036 & 0.876~$\pm$~0.036 & 1.462~$\pm$~0.026 & 1.462~$\pm$~0.026 & 0.989~$\pm$~0.031 & 1.055~$\pm$~0.031 & 1.061~$\pm$~0.033
\\ 0.333 & 0.333 & 0.333 & 0.764~$\pm$~0.036 & 0.764~$\pm$~0.03 & 1.33~$\pm$~0.021 & 1.33~$\pm$~0.021 & 0.877~$\pm$~0.032 & 0.908~$\pm$~0.026 & 0.914~$\pm$~0.028

\\\midrule
\multicolumn{10}{c}{\bfseries $p=0.2$} \\ \cline{1-10}
\\ 1 & 0 & 0 & 0.605~$\pm$~0.02 & 0.626~$\pm$~0.021 & 1.547~$\pm$~0.03 & 1.547~$\pm$~0.03 & 0.822~$\pm$~0.016 & 0.857~$\pm$~0.024 & 0.879~$\pm$~0.03
\\ 0 & 1 & 0 & 0.67~$\pm$~0.036 & 0.697~$\pm$~0.031 & 1.589~$\pm$~0.039 & 1.589~$\pm$~0.039 & 0.778~$\pm$~0.023 & 0.914~$\pm$~0.026 & 0.907~$\pm$~0.026
\\ 0 & 0 & 1 & 1.884~$\pm$~0.055 & 1.918~$\pm$~0.055 & 2.757~$\pm$~0.062 & 2.757~$\pm$~0.062 & 2.204~$\pm$~0.054 & 2.068~$\pm$~0.061 & 2.062~$\pm$~0.061
\\ 0.5 & 0.5 & 0 & 0.601~$\pm$~0.032 & 0.618~$\pm$~0.032 & 1.502~$\pm$~0.03 & 1.502~$\pm$~0.03 & 0.779~$\pm$~0.02 & 0.851~$\pm$~0.023 & 0.847~$\pm$~0.023
\\ 0 & 0.5 & 0.5 & 1.208~$\pm$~0.036 & 1.286~$\pm$~0.041 & 2.201~$\pm$~0.036 & 2.201~$\pm$~0.036 & 1.612~$\pm$~0.036 & 1.515~$\pm$~0.04 & 1.493~$\pm$~0.038
\\ 0.5 & 0 & 0.5 & 1.24~$\pm$~0.043 & 1.3~$\pm$~0.04 & 2.206~$\pm$~0.037 & 2.206~$\pm$~0.037 & 1.644~$\pm$~0.04 & 1.513~$\pm$~0.043 & 1.511~$\pm$~0.043
\\ 0.333 & 0.333 & 0.333 & 1.006~$\pm$~0.034 & 1.057~$\pm$~0.036 & 2.041~$\pm$~0.034 & 2.041~$\pm$~0.034 & 1.396~$\pm$~0.037 & 1.314~$\pm$~0.041 & 1.33~$\pm$~0.043

\\\midrule
\multicolumn{10}{c}{\bfseries $p = 0.25$} \\ \cline{1-10}
\\ 1 & 0 & 0 & 0.714~$\pm$~0.025 & 0.73~$\pm$~0.025 & 1.783~$\pm$~0.033 & 1.783~$\pm$~0.033 & 1.023~$\pm$~0.027 & 0.977~$\pm$~0.022 & 0.987~$\pm$~0.023
\\ 0 & 1 & 0 & 0.783~$\pm$~0.035 & 0.818~$\pm$~0.031 & 1.826~$\pm$~0.044 & 1.826~$\pm$~0.044 & 0.893~$\pm$~0.027 & 1.062~$\pm$~0.028 & 1.081~$\pm$~0.033
\\ 0 & 0 & 1 & 2.279~$\pm$~0.06 & 2.324~$\pm$~0.061 & 3.177~$\pm$~0.069 & 3.177~$\pm$~0.069 & 2.593~$\pm$~0.059 & 2.414~$\pm$~0.07 & 2.409~$\pm$~0.069
\\ 0.5 & 0.5 & 0 & 0.695~$\pm$~0.026 & 0.676~$\pm$~0.021 & 1.742~$\pm$~0.034 & 1.742~$\pm$~0.034 & 0.922~$\pm$~0.026 & 0.981~$\pm$~0.031 & 0.994~$\pm$~0.033
\\ 0 & 0.5 & 0.5 & 1.481~$\pm$~0.044 & 1.534~$\pm$~0.045 & 2.544~$\pm$~0.047 & 2.544~$\pm$~0.047 & 1.92~$\pm$~0.045 & 1.759~$\pm$~0.048 & 1.744~$\pm$~0.046
\\ 0.5 & 0 & 0.5 & 1.397~$\pm$~0.038 & 1.48~$\pm$~0.043 & 2.574~$\pm$~0.048 & 2.574~$\pm$~0.048 & 1.929~$\pm$~0.041 & 1.743~$\pm$~0.048 & 1.741~$\pm$~0.047
\\ 0.333 & 0.333 & 0.333 & 1.201~$\pm$~0.035 & 1.313~$\pm$~0.055 & 2.343~$\pm$~0.039 & 2.343~$\pm$~0.039 & 1.638~$\pm$~0.035 & 1.501~$\pm$~0.038 & 1.493~$\pm$~0.039

\\\midrule
\multicolumn{10}{c}{\bfseries $p = 0.3$} \\ \cline{1-10}
\\ 1 & 0 & 0 & 0.758~$\pm$~0.028 & 0.8~$\pm$~0.024 & 2.009~$\pm$~0.035 & 2.009~$\pm$~0.035 & 1.099~$\pm$~0.018 & 1.095~$\pm$~0.026 & 1.085~$\pm$~0.022
\\ 0 & 1 & 0 & 0.935~$\pm$~0.032 & 1.013~$\pm$~0.041 & 2.071~$\pm$~0.05 & 2.071~$\pm$~0.05 & 1.015~$\pm$~0.028 & 1.256~$\pm$~0.033 & 1.239~$\pm$~0.033
\\ 0 & 0 & 1 & 2.719~$\pm$~0.069 & 2.78~$\pm$~0.069 & 3.569~$\pm$~0.073 & 3.569~$\pm$~0.073 & 2.988~$\pm$~0.063 & 2.756~$\pm$~0.074 & 2.757~$\pm$~0.074
\\ 0.5 & 0.5 & 0 & 0.816~$\pm$~0.038 & 0.819~$\pm$~0.033 & 1.955~$\pm$~0.039 & 1.955~$\pm$~0.039 & 1.059~$\pm$~0.02 & 1.076~$\pm$~0.025 & 1.082~$\pm$~0.025
\\ 0 & 0.5 & 0.5 & 1.585~$\pm$~0.045 & 1.635~$\pm$~0.045 & 2.846~$\pm$~0.051 & 2.846~$\pm$~0.051 & 2.105~$\pm$~0.046 & 1.883~$\pm$~0.045 & 1.898~$\pm$~0.047
\\ 0.5 & 0 & 0.5 & 1.6~$\pm$~0.047 & 1.673~$\pm$~0.047 & 2.858~$\pm$~0.051 & 2.858~$\pm$~0.051 & 2.147~$\pm$~0.044 & 1.92~$\pm$~0.05 & 1.92~$\pm$~0.05
\\ 0.333 & 0.333 & 0.333 & 1.351~$\pm$~0.043 & 1.393~$\pm$~0.04 & 2.656~$\pm$~0.044 & 2.656~$\pm$~0.044 & 1.838~$\pm$~0.038 & 1.662~$\pm$~0.045 & 1.647~$\pm$~0.04

\\\bottomrule
\end{tabular}%
}\label{table:GW_Euclidean_varying_NAs}
\end{table}

From Table \ref{table:GW_Euclidean_varying_NAs} we conclude that the proposed method NA $k$-means outperforms all the other considered methods independent of the parameters needed for the missing value generation, when considering the Gromov Wasserstein distance as evaluation measure.

\begin{table}[ht]
\caption{Rand Scores for Euclidean Simulation With Varying Shares of Missing Values.}
\resizebox{.975\textwidth}{!}{%
\begin{tabular}{lllccccccc}\toprule%
\bfseries $\beta^{MCAR}$ & \bfseries $\beta^{MAR}$ & \bfseries $\beta^{MNAR}$ & \bfseries NA $k$-means & \bfseries mean imp. & \bfseries median imp. & \bfseries KNN & \bfseries multiple imp. & \bfseries LR & \bfseries $k$-pod \\
\midrule
\multicolumn{10}{c}{$p = 0.1$} \\ \cline{1-10}
\\ 1 & 0 & 0 & 0.906~$\pm$~0.008 & 0.887~$\pm$~0.008 & 0.886~$\pm$~0.008 & 0.911~$\pm$~0.008 & 0.897~$\pm$~0.008 & 0.904~$\pm$~0.008 & 0.875~$\pm$~0.007
\\ 0 & 1 & 0 & 0.902~$\pm$~0.008 & 0.893~$\pm$~0.007 & 0.888~$\pm$~0.007 & 0.903~$\pm$~0.008 & 0.906~$\pm$~0.007 & 0.906~$\pm$~0.008 & 0.881~$\pm$~0.007
\\ 0 & 0 & 1 & 0.881~$\pm$~0.009 & 0.864~$\pm$~0.008 & 0.858~$\pm$~0.008 & 0.876~$\pm$~0.008 & 0.88~$\pm$~0.008 & 0.881~$\pm$~0.008 & 0.873~$\pm$~0.007
\\ 0.5 & 0.5 & 0 & 0.905~$\pm$~0.008 & 0.888~$\pm$~0.007 & 0.883~$\pm$~0.007 & 0.912~$\pm$~0.008 & 0.906~$\pm$~0.008 & 0.901~$\pm$~0.008 & 0.877~$\pm$~0.007
\\ 0 & 0.5 & 0.5 & 0.896~$\pm$~0.009 & 0.874~$\pm$~0.007 & 0.875~$\pm$~0.007 & 0.897~$\pm$~0.008 & 0.885~$\pm$~0.008 & 0.895~$\pm$~0.008 & 0.875~$\pm$~0.008
\\ 0.5 & 0 & 0.5 & 0.892~$\pm$~0.009 & 0.878~$\pm$~0.008 & 0.869~$\pm$~0.007 & 0.894~$\pm$~0.008 & 0.891~$\pm$~0.008 & 0.898~$\pm$~0.008 & 0.875~$\pm$~0.008
\\ 0.333 & 0.333 & 0.333 & 0.896~$\pm$~0.009 & 0.876~$\pm$~0.008 & 0.886~$\pm$~0.007 & 0.905~$\pm$~0.008 & 0.897~$\pm$~0.009 & 0.905~$\pm$~0.008 & 0.883~$\pm$~0.007

\\\midrule
\multicolumn{10}{c}{$p = 0.2$} \\ \cline{1-10}
\\ 1 & 0 & 0 & 0.889~$\pm$~0.009 & 0.845~$\pm$~0.007 & 0.842~$\pm$~0.007 & 0.879~$\pm$~0.008 & 0.88~$\pm$~0.008 & 0.878~$\pm$~0.008 & 0.828~$\pm$~0.006
\\ 0 & 1 & 0 & 0.884~$\pm$~0.008 & 0.862~$\pm$~0.007 & 0.853~$\pm$~0.007 & 0.883~$\pm$~0.008 & 0.888~$\pm$~0.007 & 0.888~$\pm$~0.008 & 0.834~$\pm$~0.007
\\ 0 & 0 & 1 & 0.837~$\pm$~0.009 & 0.803~$\pm$~0.008 & 0.808~$\pm$~0.007 & 0.821~$\pm$~0.009 & 0.847~$\pm$~0.008 & 0.844~$\pm$~0.009 & 0.796~$\pm$~0.01
\\ 0.5 & 0.5 & 0 & 0.898~$\pm$~0.008 & 0.865~$\pm$~0.007 & 0.848~$\pm$~0.007 & 0.895~$\pm$~0.007 & 0.89~$\pm$~0.008 & 0.891~$\pm$~0.008 & 0.835~$\pm$~0.007
\\ 0 & 0.5 & 0.5 & 0.857~$\pm$~0.009 & 0.828~$\pm$~0.008 & 0.835~$\pm$~0.008 & 0.853~$\pm$~0.008 & 0.871~$\pm$~0.009 & 0.868~$\pm$~0.009 & 0.821~$\pm$~0.008
\\ 0.5 & 0 & 0.5 & 0.87~$\pm$~0.008 & 0.834~$\pm$~0.007 & 0.832~$\pm$~0.007 & 0.859~$\pm$~0.008 & 0.865~$\pm$~0.007 & 0.866~$\pm$~0.008 & 0.826~$\pm$~0.007
\\ 0.333 & 0.333 & 0.333 & 0.882~$\pm$~0.008 & 0.835~$\pm$~0.007 & 0.841~$\pm$~0.007 & 0.867~$\pm$~0.008 & 0.88~$\pm$~0.008 & 0.879~$\pm$~0.008 & 0.833~$\pm$~0.007

\\\midrule
\multicolumn{10}{c}{$p = 0.25$} \\ \cline{1-10}
\\ 1 & 0 & 0 & 0.875~$\pm$~0.009 & 0.824~$\pm$~0.007 & 0.82~$\pm$~0.006 & 0.85~$\pm$~0.008 & 0.867~$\pm$~0.008 & 0.867~$\pm$~0.008 & 0.802~$\pm$~0.007
\\ 0 & 1 & 0 & 0.873~$\pm$~0.008 & 0.839~$\pm$~0.007 & 0.836~$\pm$~0.007 & 0.873~$\pm$~0.008 & 0.873~$\pm$~0.007 & 0.874~$\pm$~0.008 & 0.817~$\pm$~0.008
\\ 0 & 0 & 1 & 0.815~$\pm$~0.009 & 0.793~$\pm$~0.008 & 0.793~$\pm$~0.009 & 0.785~$\pm$~0.008 & 0.821~$\pm$~0.009 & 0.826~$\pm$~0.009 & 0.77~$\pm$~0.01
\\ 0.5 & 0.5 & 0 & 0.891~$\pm$~0.008 & 0.845~$\pm$~0.007 & 0.841~$\pm$~0.007 & 0.87~$\pm$~0.008 & 0.88~$\pm$~0.007 & 0.881~$\pm$~0.008 & 0.817~$\pm$~0.007
\\ 0 & 0.5 & 0.5 & 0.851~$\pm$~0.009 & 0.805~$\pm$~0.007 & 0.805~$\pm$~0.007 & 0.829~$\pm$~0.008 & 0.854~$\pm$~0.008 & 0.856~$\pm$~0.008 & 0.794~$\pm$~0.008
\\ 0.5 & 0 & 0.5 & 0.855~$\pm$~0.008 & 0.81~$\pm$~0.007 & 0.812~$\pm$~0.007 & 0.827~$\pm$~0.008 & 0.855~$\pm$~0.009 & 0.856~$\pm$~0.008 & 0.796~$\pm$~0.008
\\ 0.333 & 0.333 & 0.333 & 0.862~$\pm$~0.009 & 0.822~$\pm$~0.007 & 0.825~$\pm$~0.007 & 0.843~$\pm$~0.008 & 0.864~$\pm$~0.008 & 0.864~$\pm$~0.009 & 0.806~$\pm$~0.008

\\\midrule
\multicolumn{10}{c}{$p = 0.3$} \\ \cline{1-10}
\\ 1 & 0 & 0 & 0.87~$\pm$~0.009 & 0.803~$\pm$~0.007 & 0.8~$\pm$~0.006 & 0.849~$\pm$~0.008 & 0.856~$\pm$~0.008 & 0.856~$\pm$~0.008 & 0.765~$\pm$~0.008
\\ 0 & 1 & 0 & 0.864~$\pm$~0.008 & 0.822~$\pm$~0.007 & 0.828~$\pm$~0.007 & 0.863~$\pm$~0.008 & 0.862~$\pm$~0.007 & 0.865~$\pm$~0.007 & 0.787~$\pm$~0.007
\\ 0 & 0 & 1 & 0.802~$\pm$~0.009 & 0.781~$\pm$~0.008 & 0.777~$\pm$~0.008 & 0.755~$\pm$~0.007 & 0.81~$\pm$~0.009 & 0.806~$\pm$~0.008 & 0.754~$\pm$~0.009
\\ 0.5 & 0.5 & 0 & 0.883~$\pm$~0.008 & 0.825~$\pm$~0.007 & 0.828~$\pm$~0.006 & 0.861~$\pm$~0.007 & 0.868~$\pm$~0.008 & 0.871~$\pm$~0.007 & 0.793~$\pm$~0.006
\\ 0 & 0.5 & 0.5 & 0.848~$\pm$~0.009 & 0.798~$\pm$~0.007 & 0.789~$\pm$~0.007 & 0.811~$\pm$~0.007 & 0.846~$\pm$~0.008 & 0.844~$\pm$~0.008 & 0.779~$\pm$~0.008
\\ 0.5 & 0 & 0.5 & 0.845~$\pm$~0.008 & 0.792~$\pm$~0.007 & 0.798~$\pm$~0.007 & 0.812~$\pm$~0.007 & 0.84~$\pm$~0.007 & 0.847~$\pm$~0.007 & 0.778~$\pm$~0.008
\\ 0.333 & 0.333 & 0.333 & 0.862~$\pm$~0.008 & 0.803~$\pm$~0.006 & 0.798~$\pm$~0.007 & 0.833~$\pm$~0.007 & 0.853~$\pm$~0.008 & 0.847~$\pm$~0.007 & 0.78~$\pm$~0.008

\\\bottomrule
\end{tabular}%
}\label{table:RS_varying_NAs}
\end{table}

\begin{table}[ht]
\caption{Adjusted Rand Scores for Euclidean Simulation With Varying Share of Missing Values.}
\resizebox{.975\textwidth}{!}{%
\begin{tabular}{lllccccccc}\toprule%
\bfseries $\beta^{MCAR}$ & \bfseries $\beta^{MAR}$ & \bfseries $\beta^{MNAR}$ & \bfseries NA $k$-means & \bfseries mean imp. & \bfseries median imp. & \bfseries KNN & \bfseries multiple imp. & \bfseries LR & \bfseries $k$-pod \\
\midrule
\multicolumn{10}{c}{$p = 0.1$} \\ \cline{1-10}
\\ 1 & 0 & 0 & 0.779~$\pm$~0.017 & 0.733~$\pm$~0.017 & 0.732~$\pm$~0.016 & 0.791~$\pm$~0.017 & 0.758~$\pm$~0.016 & 0.774~$\pm$~0.016 & 0.704~$\pm$~0.015
\\ 0 & 1 & 0 & 0.768~$\pm$~0.017 & 0.748~$\pm$~0.016 & 0.734~$\pm$~0.015 & 0.774~$\pm$~0.017 & 0.779~$\pm$~0.016 & 0.778~$\pm$~0.017 & 0.716~$\pm$~0.016
\\ 0 & 0 & 1 & 0.72~$\pm$~0.018 & 0.676~$\pm$~0.017 & 0.662~$\pm$~0.016 & 0.707~$\pm$~0.017 & 0.717~$\pm$~0.017 & 0.719~$\pm$~0.018 & 0.698~$\pm$~0.016
\\ 0.5 & 0.5 & 0 & 0.776~$\pm$~0.017 & 0.734~$\pm$~0.016 & 0.722~$\pm$~0.015 & 0.793~$\pm$~0.018 & 0.779~$\pm$~0.016 & 0.767~$\pm$~0.017 & 0.711~$\pm$~0.015
\\ 0 & 0.5 & 0.5 & 0.757~$\pm$~0.019 & 0.699~$\pm$~0.015 & 0.705~$\pm$~0.016 & 0.757~$\pm$~0.017 & 0.732~$\pm$~0.018 & 0.754~$\pm$~0.017 & 0.705~$\pm$~0.017
\\ 0.5 & 0 & 0.5 & 0.744~$\pm$~0.018 & 0.712~$\pm$~0.016 & 0.687~$\pm$~0.015 & 0.752~$\pm$~0.018 & 0.743~$\pm$~0.017 & 0.76~$\pm$~0.018 & 0.705~$\pm$~0.018
\\ 0.333 & 0.333 & 0.333 & 0.755~$\pm$~0.018 & 0.703~$\pm$~0.016 & 0.73~$\pm$~0.014 & 0.776~$\pm$~0.017 & 0.76~$\pm$~0.018 & 0.776~$\pm$~0.018 & 0.724~$\pm$~0.016

\\\midrule
\multicolumn{10}{c}{$p = 0.2$} \\ \cline{1-10}
\\ 1 & 0 & 0 & 0.741~$\pm$~0.018 & 0.629~$\pm$~0.014 & 0.622~$\pm$~0.014 & 0.718~$\pm$~0.016 & 0.72~$\pm$~0.016 & 0.714~$\pm$~0.017 & 0.593~$\pm$~0.015
\\ 0 & 1 & 0 & 0.728~$\pm$~0.017 & 0.665~$\pm$~0.015 & 0.649~$\pm$~0.014 & 0.723~$\pm$~0.016 & 0.736~$\pm$~0.015 & 0.733~$\pm$~0.016 & 0.608~$\pm$~0.015
\\ 0 & 0 & 1 & 0.614~$\pm$~0.019 & 0.528~$\pm$~0.016 & 0.537~$\pm$~0.016 & 0.572~$\pm$~0.018 & 0.634~$\pm$~0.017 & 0.629~$\pm$~0.018 & 0.521~$\pm$~0.02
\\ 0.5 & 0.5 & 0 & 0.759~$\pm$~0.017 & 0.674~$\pm$~0.015 & 0.634~$\pm$~0.014 & 0.75~$\pm$~0.015 & 0.739~$\pm$~0.017 & 0.742~$\pm$~0.017 & 0.608~$\pm$~0.016
\\ 0 & 0.5 & 0.5 & 0.664~$\pm$~0.018 & 0.592~$\pm$~0.016 & 0.605~$\pm$~0.017 & 0.652~$\pm$~0.016 & 0.697~$\pm$~0.018 & 0.691~$\pm$~0.017 & 0.576~$\pm$~0.018
\\ 0.5 & 0 & 0.5 & 0.691~$\pm$~0.017 & 0.598~$\pm$~0.015 & 0.596~$\pm$~0.014 & 0.663~$\pm$~0.016 & 0.677~$\pm$~0.016 & 0.682~$\pm$~0.017 & 0.588~$\pm$~0.016
\\ 0.333 & 0.333 & 0.333 & 0.719~$\pm$~0.017 & 0.6~$\pm$~0.014 & 0.616~$\pm$~0.015 & 0.683~$\pm$~0.017 & 0.714~$\pm$~0.016 & 0.71~$\pm$~0.016 & 0.598~$\pm$~0.015

\\\midrule
\multicolumn{10}{c}{$p = 0.25$} \\ \cline{1-10}
\\ 1 & 0 & 0 & 0.709~$\pm$~0.018 & 0.583~$\pm$~0.014 & 0.574~$\pm$~0.014 & 0.646~$\pm$~0.016 & 0.689~$\pm$~0.016 & 0.688~$\pm$~0.016 & 0.538~$\pm$~0.015
\\ 0 & 1 & 0 & 0.698~$\pm$~0.016 & 0.607~$\pm$~0.015 & 0.606~$\pm$~0.015 & 0.699~$\pm$~0.016 & 0.695~$\pm$~0.016 & 0.701~$\pm$~0.016 & 0.566~$\pm$~0.017
\\ 0 & 0 & 1 & 0.558~$\pm$~0.019 & 0.505~$\pm$~0.016 & 0.508~$\pm$~0.02 & 0.488~$\pm$~0.017 & 0.574~$\pm$~0.018 & 0.583~$\pm$~0.019 & 0.466~$\pm$~0.02
\\ 0.5 & 0.5 & 0 & 0.742~$\pm$~0.017 & 0.628~$\pm$~0.015 & 0.616~$\pm$~0.015 & 0.695~$\pm$~0.016 & 0.713~$\pm$~0.016 & 0.719~$\pm$~0.016 & 0.567~$\pm$~0.015
\\ 0 & 0.5 & 0.5 & 0.645~$\pm$~0.018 & 0.53~$\pm$~0.015 & 0.532~$\pm$~0.015 & 0.59~$\pm$~0.016 & 0.651~$\pm$~0.018 & 0.656~$\pm$~0.017 & 0.506~$\pm$~0.018
\\ 0.5 & 0 & 0.5 & 0.656~$\pm$~0.017 & 0.544~$\pm$~0.015 & 0.552~$\pm$~0.016 & 0.587~$\pm$~0.016 & 0.657~$\pm$~0.018 & 0.658~$\pm$~0.018 & 0.524~$\pm$~0.018
\\ 0.333 & 0.333 & 0.333 & 0.674~$\pm$~0.018 & 0.573~$\pm$~0.015 & 0.579~$\pm$~0.016 & 0.627~$\pm$~0.016 & 0.677~$\pm$~0.017 & 0.678~$\pm$~0.018 & 0.54~$\pm$~0.017

\\\midrule
\multicolumn{10}{c}{$p = 0.3$} \\ \cline{1-10}
\\ 1 & 0 & 0 & 0.695~$\pm$~0.017 & 0.528~$\pm$~0.013 & 0.525~$\pm$~0.014 & 0.641~$\pm$~0.015 & 0.661~$\pm$~0.014 & 0.662~$\pm$~0.015 & 0.462~$\pm$~0.014
\\ 0 & 1 & 0 & 0.674~$\pm$~0.016 & 0.571~$\pm$~0.014 & 0.588~$\pm$~0.015 & 0.672~$\pm$~0.016 & 0.667~$\pm$~0.014 & 0.674~$\pm$~0.015 & 0.503~$\pm$~0.016
\\ 0 & 0 & 1 & 0.522~$\pm$~0.019 & 0.474~$\pm$~0.015 & 0.467~$\pm$~0.018 & 0.407~$\pm$~0.015 & 0.541~$\pm$~0.019 & 0.532~$\pm$~0.018 & 0.425~$\pm$~0.018
\\ 0.5 & 0.5 & 0 & 0.722~$\pm$~0.016 & 0.577~$\pm$~0.014 & 0.588~$\pm$~0.015 & 0.669~$\pm$~0.015 & 0.687~$\pm$~0.016 & 0.691~$\pm$~0.015 & 0.51~$\pm$~0.014
\\ 0 & 0.5 & 0.5 & 0.634~$\pm$~0.018 & 0.507~$\pm$~0.014 & 0.491~$\pm$~0.015 & 0.543~$\pm$~0.014 & 0.629~$\pm$~0.016 & 0.626~$\pm$~0.017 & 0.478~$\pm$~0.017
\\ 0.5 & 0 & 0.5 & 0.624~$\pm$~0.017 & 0.492~$\pm$~0.014 & 0.515~$\pm$~0.017 & 0.542~$\pm$~0.015 & 0.609~$\pm$~0.016 & 0.628~$\pm$~0.017 & 0.469~$\pm$~0.017
\\ 0.333 & 0.333 & 0.333 & 0.667~$\pm$~0.017 & 0.519~$\pm$~0.013 & 0.511~$\pm$~0.015 & 0.594~$\pm$~0.015 & 0.646~$\pm$~0.016 & 0.629~$\pm$~0.015 & 0.474~$\pm$~0.016

\\\bottomrule
\end{tabular}%
}\label{table:adj_RS_varying_NAs}
\end{table}

From Tables \ref{table:RS_varying_NAs} and \ref{table:adj_RS_varying_NAs} we can see that also w.r.t.\ the (adjusted) Rand score, when NA $k$-means is solely viewed as clustering procedure for Euclidean points with missing values, it competes with the best performing imputation + clustering techniques. Notably, $k$-pod is outperformed consistently.

\end{document}